\documentclass{sig-alternate-2013}

\setlength{\paperheight}{11in}
\setlength{\paperwidth}{8.5in}
\usepackage[pass]{geometry}

\newfont{\mycrnotice}{ptmr8t at 7pt}
\newfont{\myconfname}{ptmri8t at 7pt}

\permission{Permission to make digital or hard copies of all or part of this work for personal or classroom use is granted without fee provided that copies are not made or distributed for profit or commercial advantage and that copies bear this notice and the full citation on the first page. Copyrights for components of this work owned by others than ACM must be honored. Abstracting with credit is permitted. To copy otherwise, or republish, to post on servers or to redistribute to lists, requires prior specific permission and/or a fee. Request permissions from Permissions@acm.org.}
\conferenceinfo{KDD'15,}{August 10-13, 2015, Sydney, NSW, Australia.} 
\copyrightetc{ACM \the\acmcopyr}
\crdata{ISBN 978-1-4503-3664-2/15/08\$15.00. \\
DOI: http://dx.doi.org/10.1145/2783258.2783340}
\clubpenalty=10000
\widowpenalty = 10000

\makeatletter
\newif\if@restonecol
\makeatother

\usepackage[linesnumbered,boxed,commentsnumbered,ruled]{algorithm2e}

\newcommand{\hide}[1]{}

\usepackage{color}
\usepackage{soul}
\usepackage{morefloats}
\usepackage{pmat}
\usepackage{blkarray}
\usepackage{graphicx}
\usepackage{caption}
\usepackage{subcaption}
\usepackage{array}
\usepackage{multirow}
\usepackage{balance}
\newtheorem{problem}{Problem}
\newtheorem{lemma}{Lemma}
\newtheorem{theorem}{Theorem}

\newcommand{\ourAlgName}{\textsf{iBall}}
\newcommand{\matindex}[1]{\mbox{\scriptsize#1}}
\newcommand{\mat}[1]{{\bf #1}}   
\newcommand{\matsup}[2]{{\bf #1^{(#2)}}}
\newcommand{\hh}[1]{{\small\color{red}{\bf hh: #1}}}

\newcommand\myeq{\mathrel{\overset{\makebox[0pt]{\mbox{\normalfont\tiny\sffamily def}}}{=}}}
\renewcommand{\hh}[1]{}
\begin{document}
%

\title{The Child is Father of the Man: Foresee the Success \\ at the Early Stage}
%
%
%
%
%
\numberofauthors{2} 
\author{
\alignauthor
Liangyue Li\\
\affaddr{Arizona State University}\\
\email{liangyue@asu.edu}
\alignauthor
Hanghang Tong\\
\affaddr{Arizona State University}\\
\email{hanghang.tong@asu.edu}
}

\hide{
\numberofauthors{8} 
%
\author{
%
%
\alignauthor
Ben Trovato\titlenote{Dr.~Trovato insisted his name be first.}\\
       \affaddr{Institute for Clarity in Documentation}\\
       \affaddr{1932 Wallamaloo Lane}\\
       \affaddr{Wallamaloo, New Zealand}\\
       \email{trovato@corporation.com}
\alignauthor
G.K.M. Tobin\titlenote{The secretary disavows
any knowledge of this author's actions.}\\
       \affaddr{Institute for Clarity in Documentation}\\
       \affaddr{P.O. Box 1212}\\
       \affaddr{Dublin, Ohio 43017-6221}\\
       \email{webmaster@marysville-ohio.com}
\alignauthor Lars Th{\o}rv{\"a}ld\titlenote{This author is the
one who did all the really hard work.}\\
       \affaddr{The Th{\o}rv{\"a}ld Group}\\
       \affaddr{1 Th{\o}rv{\"a}ld Circle}\\
       \affaddr{Hekla, Iceland}\\
       \email{larst@affiliation.org}
\and  
\alignauthor Lawrence P. Leipuner\\
       \affaddr{Brookhaven Laboratories}\\
       \affaddr{Brookhaven National Lab}\\
       \affaddr{P.O. Box 5000}\\
       \email{lleipuner@researchlabs.org}
\alignauthor Sean Fogarty\\
       \affaddr{NASA Ames Research Center}\\
       \affaddr{Moffett Field}\\
       \affaddr{California 94035}\\
       \email{fogartys@amesres.org}
\alignauthor Charles Palmer\\
       \affaddr{Palmer Research Laboratories}\\
       \affaddr{8600 Datapoint Drive}\\
       \affaddr{San Antonio, Texas 78229}\\
       \email{cpalmer@prl.com}
}
\additionalauthors{Additional authors: John Smith (The Th{\o}rv{\"a}ld Group,
email: {\texttt{jsmith@affiliation.org}}) and Julius P.~Kumquat
(The Kumquat Consortium, email: {\texttt{jpkumquat@consortium.net}}).}
\date{30 July 1999}
}

\maketitle

\begin{abstract}

Understanding the dynamic mechanisms that drive the high-impact scientific work (e.g., research papers, patents) is a long-debated research topic and has many important implications, ranging from personal career development and recruitment search, to the jurisdiction of research resources. Recent advances in characterizing and modeling scientific success have made it possible to forecast the long-term impact of scientific work, where data mining techniques, supervised learning in particular, play an essential role. Despite much progress, several key algorithmic challenges in relation to predicting long-term scientific impact have largely remained open. In this paper, we propose a joint predictive model to forecast the long-term scientific impact at the early stage, which simultaneously addresses a number of these open challenges, including the scholarly feature design, the non-linearity, the domain-heterogeneity and dynamics. In particular, we formulate it as a regularized optimization problem and propose effective and scalable algorithms to solve it. We perform extensive empirical evaluations on large, real scholarly data sets to validate the effectiveness and the efficiency of our method.

\hide{

Understanding the underlying pattern of items' impact can help with people's attention economy.
In this paper, we focus on predicting individual paper's future citation count by looking only at its first three years' citation history.
Prior studies assume that all the papers' citations are governed by the same underlying process.
Instead of learning a single regression model for the whole paper corpus, we partition the papers based on their citation behaviors and learn a regression model for each domain.
To leverage the similarity of closely related domains, we propose to jointly learn these regression models. The framework can be generalized to linear and non-linear regression and 
enjoys closed-form solution. To tackle the computation challenges of the non-linear learning algorithm, we design a fast approximation algorithm using low-rank matrix approximation techniques. 
We also give the correctness proof and error bound analysis of our algorithms.
The experimental results on real world citation dataset clearly show the effectiveness and efficiency of our algorithms. 
}
\end{abstract}



\category{H.2.8}{Database Management}{Database applications}[Data mining]
\keywords{long term impact prediction; joint predictive model}

\section{Introduction}
\label{sec:intro}
Understanding the dynamic mechanisms that drive the high-impact scientific work (e.g., research papers, patents) is a long-debated research topic and has many important implications, ranging from personal career development and recruitment search, to the jurisdiction of research resources. Scholars, especially junior scholar, who could master the key to producing high-impact work would attract more attentions as well as research resources; and thus put themselves in a better position in their career developments. High-impact work remains as one of the most important criteria for various organization (e.g. companies, universities and governments) to identify the best talents, especially at their early stages. It is highly desirable for researchers to judiciously search the right literature that can best benefit their research. 

\hide{
The impact/popularity of an item reflects how much influence it has on the general public.
For example, Ellen DeGeneres' selfie with other celebrities at Oscars 2014 set the record of most retweeted tweet of all time. 
In December 2014, Psy's Gangnam Style music video has been viewed so many times that Youtube had to upgrade to a 64-bit integer.
The question ``Lifestyle: What can I learn/know right now in 10 minutes that will be useful for the rest of my life?" has more than 10 million views on Quora. In these examples, the number of retweets and views reflect the impact of the tweets, videos and questions. The items with high impact often have more information, importance, and insights and can spur more discussion, comments and inspiration. Understanding the pattern of impact can not only prevent us from drowning in information explosion, but also is of importance for policy maker, marketing practitioner, etc.
}

Recent advances in characterizing and modeling scientific success have made it possible to forecast the long-term impact of scientific work. Wuchty et al.~\cite{Wuchty18052007} observe that papers with multiple authors receive more citations than solo-authored ones. 
Uzzi et al.~\cite{Uzzi25102013} find that the highest-impact science work is primarily grounded in atypical combinations of prior ideas while embedding them in conventional knowledge frames. Recently, Wang et al. ~\cite{wang2013quantifying} develop a mechanistic model for the citation dynamics of individual papers. In data mining community, efforts have also been made to predict the long-term success. Carlos et al.~\cite{castillo2007estimating} estimate the number of citations of a paper based on the information of past articles written by the same author(s). Yan et al.~\cite{DBLP:conf/cikm/YanTLSL11} design effective content (e.g., topic diversity) and contextual (e.g., author's {\it h}-index)\hh{liangyue: what kind of features, e.g., content-based, contextual features? mention that} features for the prediction of future citation counts.  Despite much progress, the following four key algorithmic challenges in relation to predicting long-term scientific impact have largely remained open.
\vspace{-5pt}
\begin{itemize}
\setlength\itemsep{1pt}
\item [C1] {\it Scholarly feature design:} many factors could affect scientific work's long-term impact, e.g., research topics, author reputations, venue ranks, citation networks' topological features, etc. Among them, which bears the most predictive power? 
\item [C2] {\it Non-linearity:} the effect of the above scholarly features on the long-term scientific impact might be way beyond a linear relationship.
\item [C3] {\it Domain heterogeneity:} the impact of scientific work in different fields or domains might behave differently; yet some closely related fields could still share certain commonalities. Thus, a one-size-fits-all or one-size-fits-one solution  might be sub-optimal.
\item [C4] {\it Dynamics:} with the rapid development of science and engineering, a significant number of new research papers are published each year, even on a daily basis with the advent of arXiv\footnote{arxiv.org}. The predictive model needs to handle such stream-like data efficiently, to reflect the recency of the scientific work.
\end{itemize}

In this paper, we propose a joint predictive model--\underline{I}mpact Crystal \underline{Ball} (\ourAlgName~in short)  -- to forecast the long term scientific impact at an early stage by collectively addressing the above four challenges. First (for C1), we found that the citation history of a scholarly entity (e.g., paper, researcher, venue) in the first three years (e.g., since its publication date) is a strong indicator of its long-term impact (e.g., the accumulated citation count in ten years); and adding additional contextual or content features brings little marginal benefits in terms of prediction performance. This not only largely simplifies the feature design, but also enables us to forecast the long-term scientific impact at its early stage. 
Second (for C2), our joint predictive model is flexible, being able to characterize both the linear and non-linear relationship between the features and the impact score. Third (for C3), we propose to jointly learn a predictive model to differentiate distinctive domains, while taking into consideration the commonalities among these similar domains (see an illustration in Figure~\ref{fig:domain_relation}). 
 Fourth (for C4), we further propose a fast on-line update algorithm to adapt our joint predictive model efficiently over time to accommodate newly arrived training examples (e.g., newly published papers).

\hide{
The challenges in predicting a paper's future citation count come in two folds: first, the distribution of citation counts exhibits heavy tail. That is, majority of the papers
have no or very few citations. For example, in AMiner citation network dataset~\cite{DBLP:conf/kdd/TangZYLZS08}, 87.45\% of the papers have zero citation 10 years after publication. 
Second, the dynamic process that governs each individual paper is too noisy to be amenable to quantification as noted in~\cite{DBLP:conf/aaai/ShenWSB14,wang2013quantifying}. Besides, the papers' yearly citations exhibits drastically complex patterns.   For instance, some papers have no or few citations throughout their life cycle, some papers receive intense 
attentions in certain years after they publish. Such two challenges make a single regression model incompetent for predicting a paper's citation count as in the pilot study~\cite{DBLP:conf/cikm/YanTLSL11}.

In this paper, we propose to stratify the
papers according to their citation behavior observed within the first three years of publication and then learn a regression model in each strata. To further improve the prediction within each strata, we design a joint learning framework to learn a regression model for each strata together. The reasoning is that strata with similar citation behavior would share similar learning parameters and such knowledge
should be leveraged across strata. The benefit is especially pronounced for strata with very few training samples. For example, in Figure~\ref{fig:domain_relation}, we use nodes to represent different domains and each domain has some papers represented by their citation history. The strength of links between domains indicate the similarity between the two domains. In this figure,  Domain 1, 2 and 3 have triangular relationship, intuitively the processes that govern the citation behavior in each of these three domains might be similar; while Domain 4 is only connected Domain 2, its citation pattern is only similar to that of Domain 2. Our joint learning model will enforce such domain relations.

We show in the paper that the joint learning model can be generalized to both linear and non-linear models. The experiment results shows that the non-linear models exhibit exceptionally good prediction accuracy, however, its power is limited by the large space consumption and low efficiency. To speed up the joint non-linear method, we carefully design an approximation algorithm taking advantage of low-rank approximation in matrix computation. The experiment shows ...
}

\begin{figure}[!tb]
\centering
\includegraphics[width = 0.45\textwidth, height=50mm ]{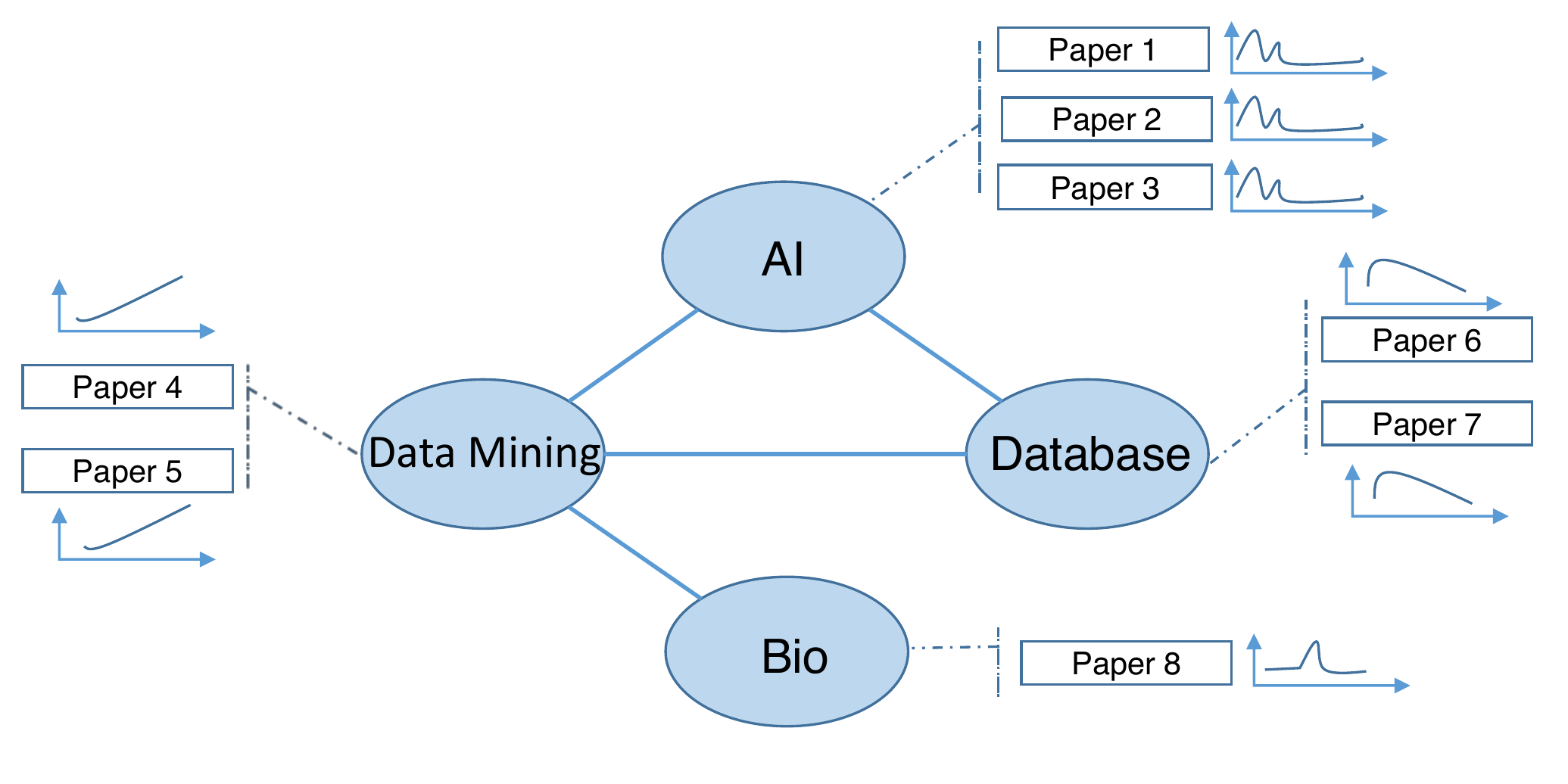}
\caption{An illustrative example of the proposed joint predictive model. Papers from the same domain (e.g., AI, Databases, Data Mining and Bio) share similar patterns in terms of attracting citations over time. Certain domains (e.g., AI and Data Mining) are more related with each other than other domains (e.g., AI and Bio). We want to jointly learn four predictive models (one for each domain), with the goal of encouraging the predictive models from more related domains (e.g., AI and Data Mining) to be `similar' with each other.}\label{fig:domain_relation}
\end{figure}

Our main contributions can be summarized as follows:
\begin{itemize}
\item {\bf Algorithms:} we propose a joint predictive model --\ourAlgName -- for the long-term scientific impact prediction problem, together with its efficient solvers.

\item {\bf Proofs and analysis:} we analyze the correctness, the approximation quality and the complexity of our proposed algorithms.

\item {\bf Empirical evaluations:} we conduct extensive experiments to demonstrate the effectiveness and efficiency of our proposed algorithms. 
\end{itemize}

The rest of the paper is organized as follows. Section~\ref{sec:prob} gives the problem definition. Section~\ref{sec:empirical_obs} provides empirical observation of the AMiner citation network dataset. 
Section~\ref{sec:algs} proposes our joint model and the fast algorithm.
Section~\ref{sec:exp} shows the experimental results. Section~\ref{sec:rel} reviews related work and the paper concludes in Section~\ref{sec:con}.

\section{Problem Statement}
\label{sec:prob}
\begin{table}[!t]
\caption{Symbols}
\centering
\begin{tabular}{| c | p{7cm}|}
\hline
\bf{Symbols}                & \bf{Definition}\\
\hline \hline
$n_d$  & number of domains\\
$n_i$  & number of training samples in the $i$-th domain\\
$m_i$  & number of new training samples in the $i$-th domain\\
$d$    & feature dimensionality\\
$\matsup{X_t}{i}$ & feature matrix of training samples from the $i$-th domain at time $t$\\
$\matsup{x_{t+1}}{i}$ & feature matrix of new training samples from the $i$-th domain at time $t+1$\\
$\matsup{Y_t}{i}$ & impact vector of training samples from the $i$-th domain at time $t$\\
$\matsup{y_{t+1}}{i}$ & impact vector of new training samples from the $i$-th domain at time $t+1$\\
$\mat{A}$ & adjacency matrix of domain relation graph\\
$\matsup{w}{i}$ & model parameter for the $i$-th domain\\
$\matsup{K}{i}$ & kernel matrix of training samples in the $i$-th domain\\
$\matsup{K}{ij}$ & cross domain kernel matrix of training samples in the $i$-th and $j$-th domains\\
\hline
\end{tabular}
\label{tab:symbol}
\end{table}

In this section, we first present the notations used throughout the paper and then formally define the long-term scientific impact prediction for scholarly entities (e.g., research papers, researchers, conferences).

Table~\ref{tab:symbol} lists the main symbols used throughout the paper. We use bold capital letters (e.g., $\mat{A}$) for matrices, bold lowercase letters (e.g., $\mat{w}$) for vectors, 
and lowercase letters (e.g., $\lambda$) for scalars. 
For matrix indexing, we use a convention similar to Matlab as follows, e.g., 
we use $\mat{A}(i,j)$ to denote the entry at the $i$-th row and $j$-th column of a matrix $\mat{A}$,
 $\mat{A}(i, :)$ to denote the $i$-th row of $\mat{A}$ and $\mat{A}(:, j)$ to denote the $j$-th column of $\mat{A}$. Besides, we use prime for matrix transpose, e.g., $\mat{A'}$ is the transpose of $\mat{A}$. 
 
 To differentiate samples from different domains at different time steps, we use superscript to index the domain and subscript to indicate timestamp.
 For instance, $\matsup{X_t}{i}$ denotes the feature matrix of all the scholarly entities in the $i$-th domain at time $t$ and $\matsup{x_{t+1}}{i}$ denotes the feature matrix of new scholarly entities in the $i$-th domain at time $t+1$. Hence, $\matsup{X_{t+1}}{i} = [\matsup{X_t}{i};\matsup{x_{t+1}}{i}]$. 
 Similarly, $\matsup{Y_t}{i}$ denotes the impact vector of scholarly entities in the $i$-th domain at time $t$ and $\matsup{y_{t+1}}{i}$ denotes the impact vector of new scholarly entities in the $i$-th domain at time $t+1$. Hence, $\matsup{Y_{t+1}}{i} = [\matsup{Y_t}{i};\matsup{y_{t+1}}{i}]$. We will omit the superscript and/or subscript when the meaning of the matrix is clear from the context. 
 
 With the above notations, we are ready to define the long-term impact prediction problem in both static and dynamic settings as follows:

\begin{problem}{Static Long-term Scientific Impact Prediction}

\begin{description}
\item[Given:] feature matrix $\mat{X}$ and impact $\mat Y$ of scholarly entities

\item[Predict:] the long-term impact of new scholarly entities
\end{description}
\end{problem}

We further define the dynamic impact prediction problem as:
\begin{problem}{Dynamic Long-term Scientific Impact Prediction}

\begin{description}
\item[Given:] feature matrix $\mat{X_t}$ and new training feature matrix $\mat{x_{t+1}}$ of scholarly entities, the impact vector $\mat{Y_t}$, and the impact vector of new training samples $\mat{y_{t+1}}$

\item[Predict:] the long-term impact of new scholarly entities
\end{description}

\end{problem}

\section{Empirical Observations}
\label{sec:empirical_obs}
\begin{figure*}[!htb]
\minipage[t]{0.32\textwidth}
\centering
  \includegraphics[width=\linewidth]{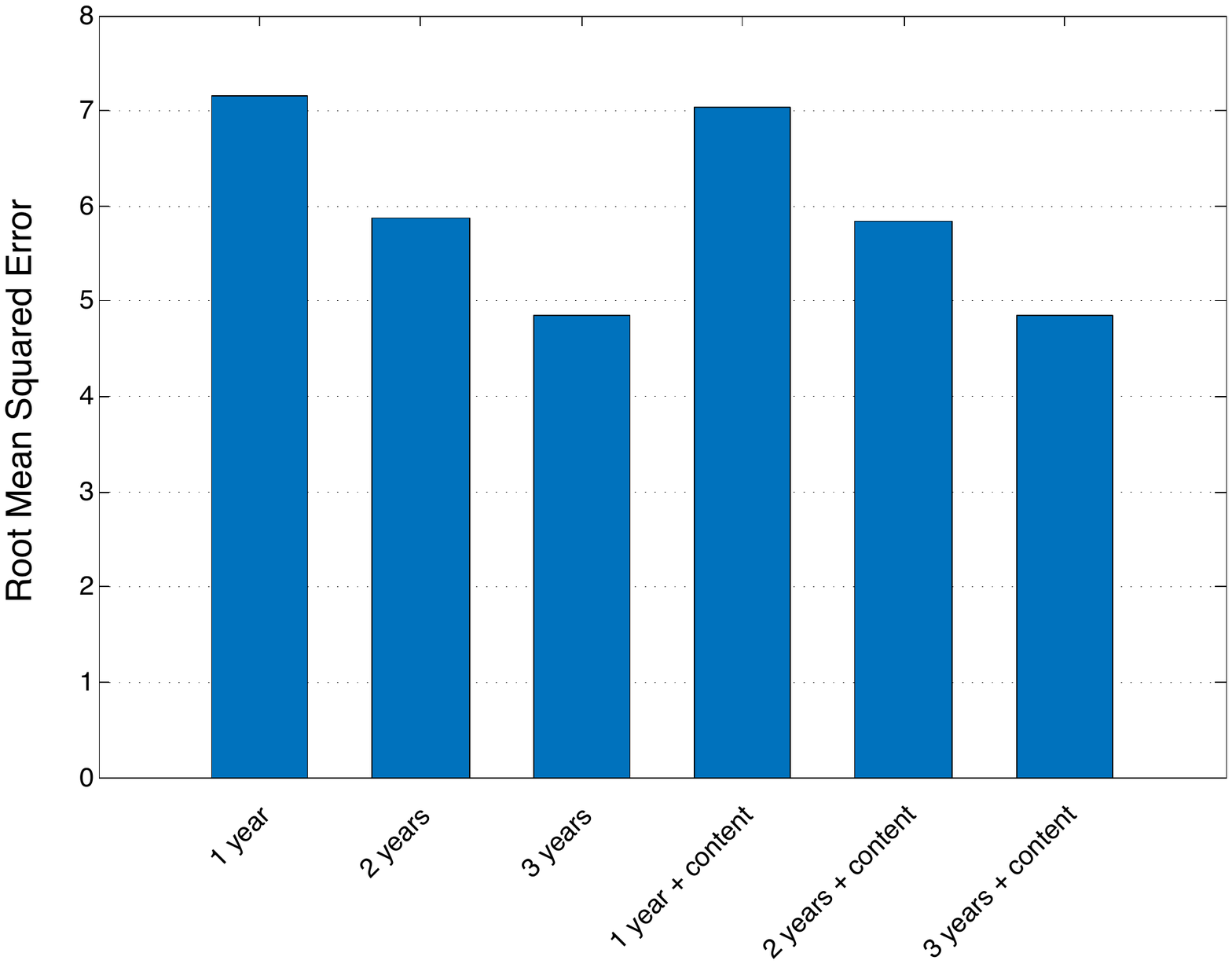}
\caption{Prediction error comparison with different features.}\label{fig:feature_design}
\endminipage\hfill
\minipage[t]{0.32\textwidth}
\centering
  \includegraphics[width=\linewidth]{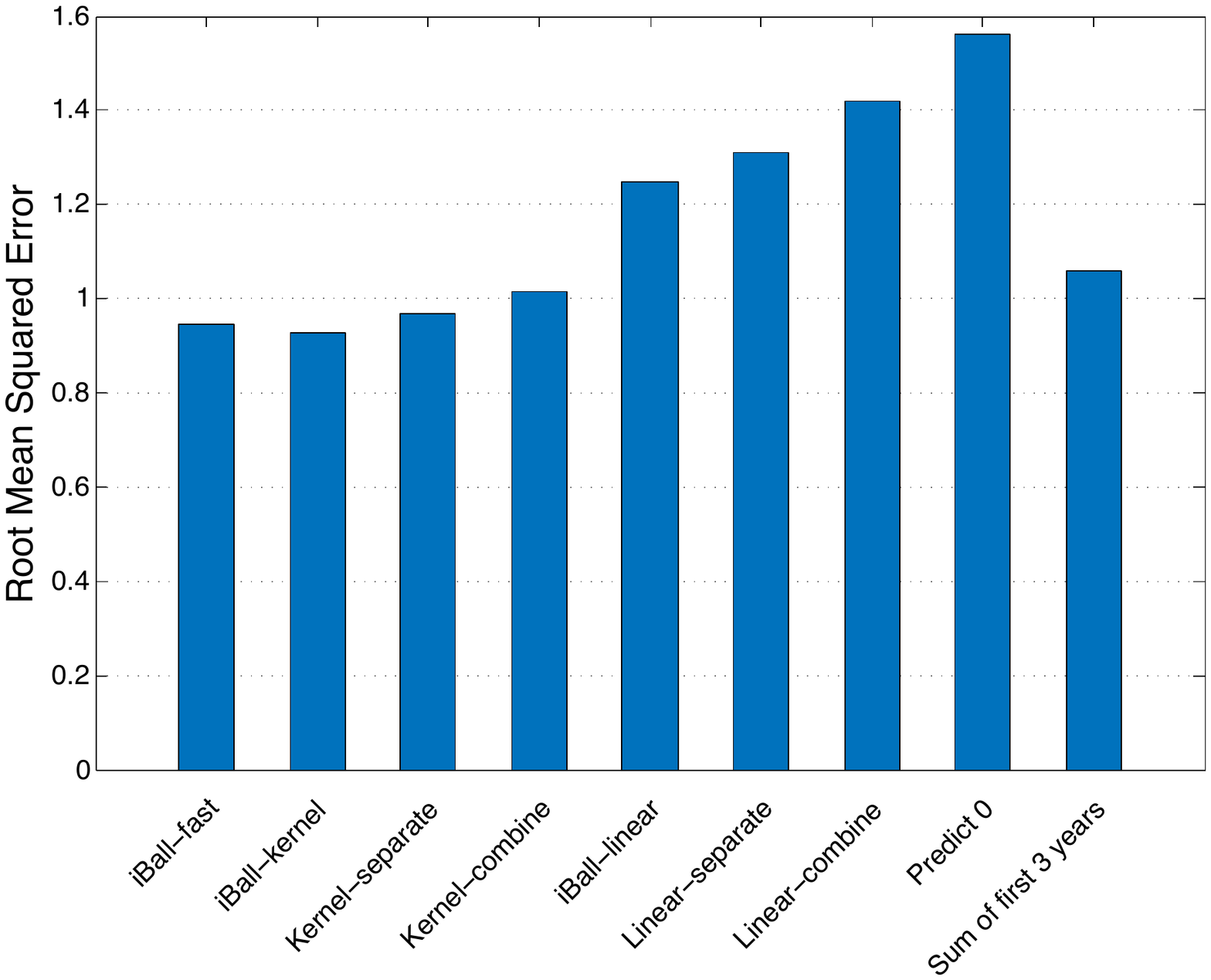}
  \caption{RMSE comparisons using different methods. The citation count is normalized in this figure. See Section~\ref{sec:exp} for normalization details.}\label{fig:non_linearity}
\endminipage\hfill
\minipage[t]{0.32\textwidth}
\centering
  \includegraphics[width=\linewidth]{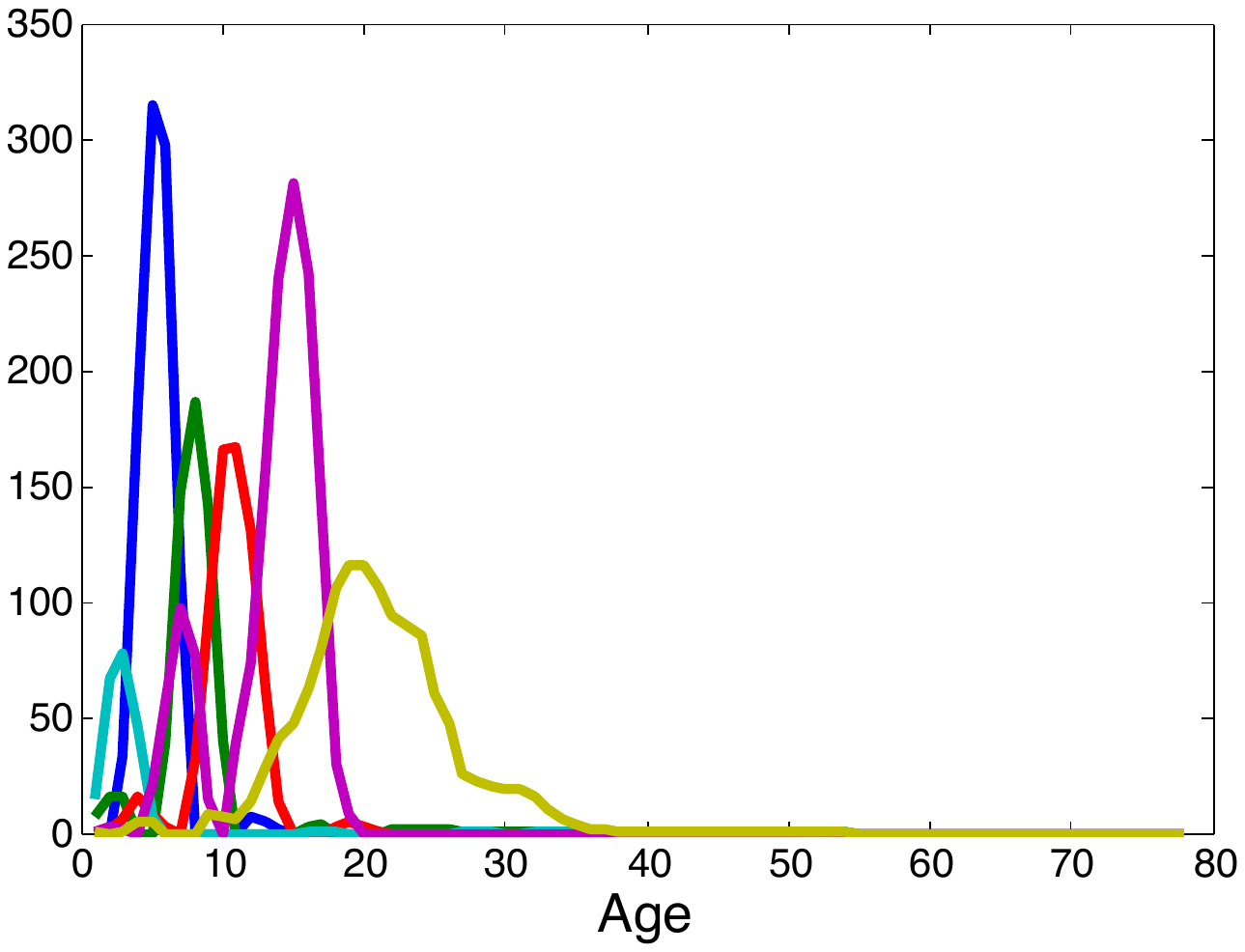}
  \caption{Visualization of papers' citatoin behavior. Different colors encodes different citation behaviors.}\label{fig:citation_pattern}
\endminipage
\end{figure*}

In this section, we perform an empirical analysis to highlight some of the key challenges (summarized in introduction section), on {\em AMiner} citation network~\cite{DBLP:conf/kdd/TangZYLZS08}. This is a rich real dataset for bibliography network analysis and mining. The dataset contains 2,243,976 papers, 1,274,360 authors, and 8,882 computer science venues. For each paper, the dataset provides its titles, authors, references, publication venue and publication year. The papers date from year 1936 to 2013. In total, the dataset has 1,912,780 citation relationships extracted from ACM library. 

\subsection{Power-law distribution}

The distribution of the citation counts of all the papers and the distribution of the number of citations received within 10 years after publication are presented in Figures~\ref{subfig:paper_citation} and \ref{subfig:paper_10yr_citation}. We also show the distribution of citation counts of all the authors and all the venues respectively in Figures~\ref{subfig:author_citation} and \ref{subfig:venue_citation}. It is clear that all these citations are of a power law distribution. Nearly 87.45\% papers have zero citations within 10 years. 


\begin{figure}[!htb]
\centering
\begin{subfigure}[t]{0.2\textwidth}
\includegraphics[width=\textwidth]{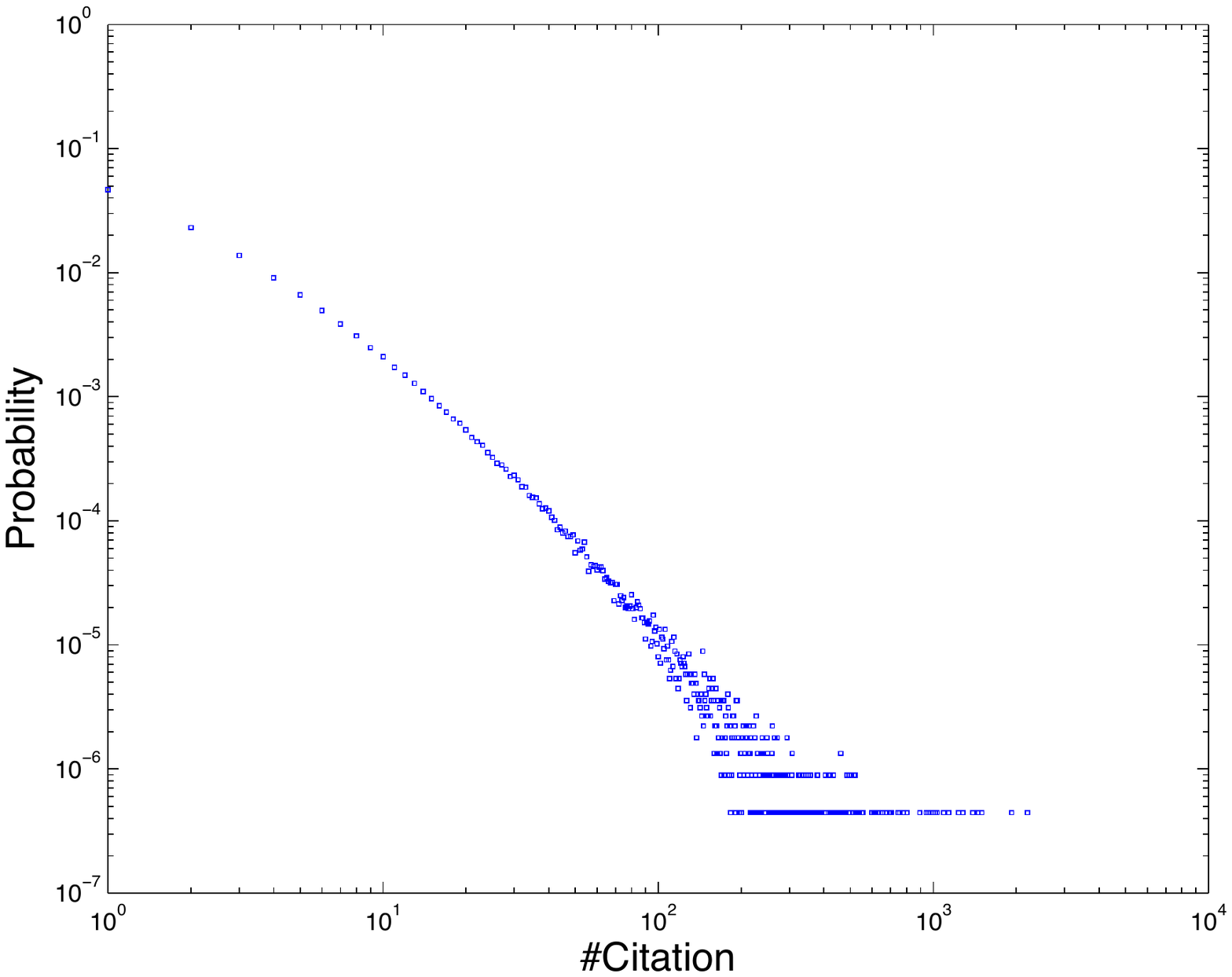}
\caption{Distribution of the number of all citations of papers.}
\label{subfig:paper_citation}
\end{subfigure}
~
\begin{subfigure}[t]{0.2\textwidth}
\includegraphics[width=\textwidth]{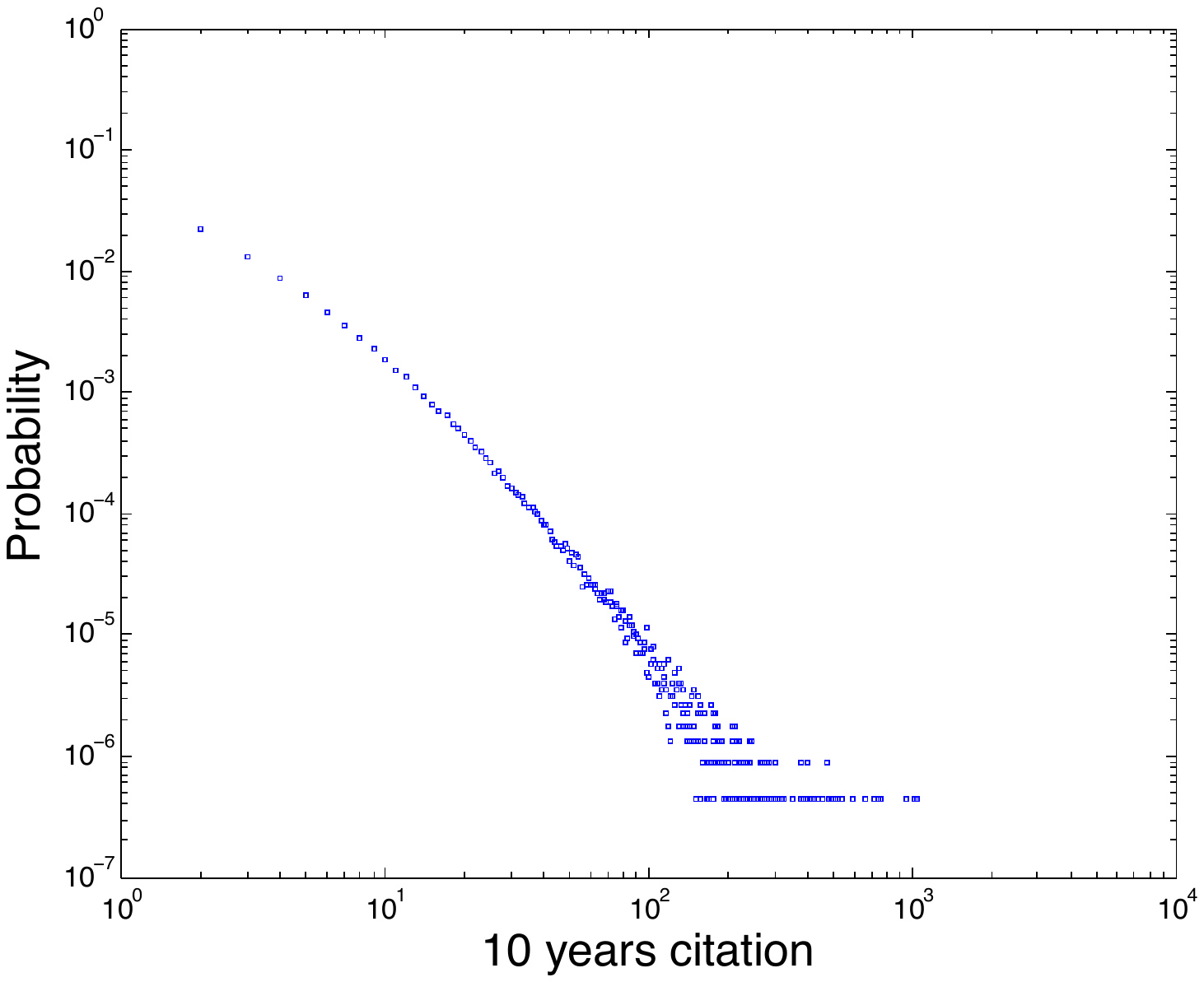}
\caption{Distribution of the number of citations of papers received within 10 years after publication.}
\label{subfig:paper_10yr_citation}
\end{subfigure}

\begin{subfigure}[t]{0.2\textwidth}
\includegraphics[width=\textwidth]{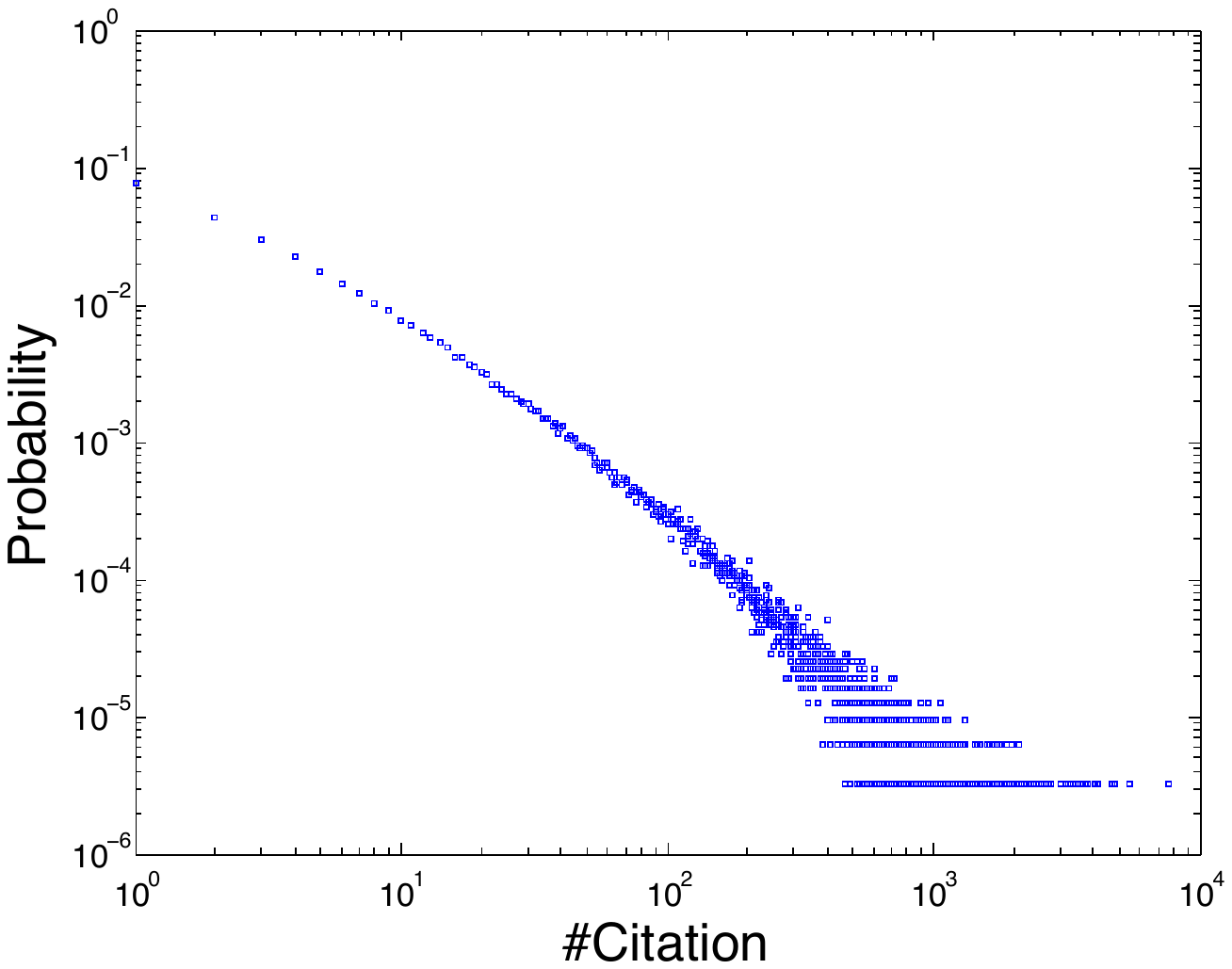}
\caption{Distribution of the number of all citations of authors.}
\label{subfig:author_citation}
\end{subfigure}
~
\begin{subfigure}[t]{0.2\textwidth}
\includegraphics[width=\textwidth]{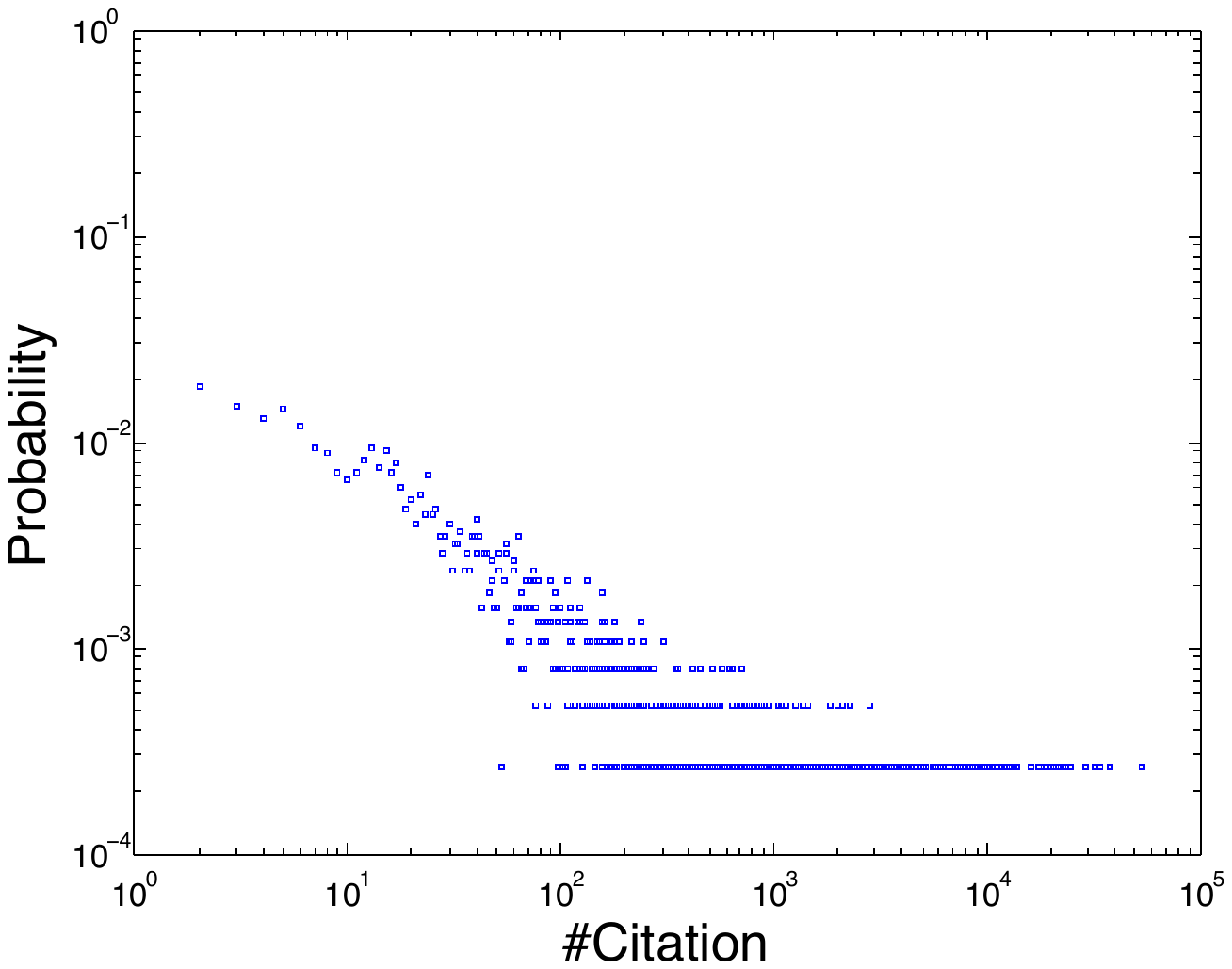}
\caption{Distribution of the number of all citations of venues}
\label{subfig:venue_citation}
\end{subfigure}
\caption{Citation distributions of AMiner citation dataset.}
\label{fig:citation_dist}
\end{figure}

\vspace{-5pt}
\subsection{Feature design}
Prior work~\cite{castillo2007estimating,DBLP:conf/cikm/YanTLSL11} has proposed some effective features for citation count prediction, e.g., topic features (topic rank, diversity), author features ({\it h}-index, productivity), venue features (venue rank, venue centrality). Other work~\cite{wang2013quantifying} make predictions only on the basis of the early years' citation data and find that the future impact of majority papers fall within the predicted citation range. We conduct experiment to compare performance of different features. Figure~\ref{fig:feature_design} shows the root mean squared error using different features with a regression model for the prediction of 10 years' citation count. For example, `3 years' means using the first 3 years' citation as feature, and `3 years + content' means using the first 3 years' citation along with content features (e.g., topic, author features). The result shows that adding content features (the right three bars in the figure) brings little improvement for citation prediction.



\subsection{Non-linearity}
To see if the feature has linear relationship with the citation, we compare the performance of different methods using only the first 3 years' citation history. In Figure~\ref{fig:non_linearity}, the non-linear models (\ourAlgName-fast, \ourAlgName-kernel, Kernel-combine) all outperform the linear models (\ourAlgName-linear, Linear-separate, Linear-combine). See Section~\ref{sec:algs} and \ref{sec:exp} for details of these models. It is clear that complex relationship between the features and the impact cannot be well characterized by a simple linear model - the prediction performance for all the linear models is even worse than the baseline method (using the summation of the first 3 years' citation counts).


\subsection{Domain heterogeneity}
To get a sense of the dynamic patterns of the citation count, we construct a paper-age citation matrix $\mat{M}$, where $\mat{M_{ij}}$ indicates the number of citations the $i$-th paper receives in the $j$-th year after it gets published. The matrix $\mat{M}$ is then factorized as $\mat{M} \approx \mat{WH}$ using Non-negative Matrix Factorization (NMF)~\cite{lee1999learning}. We visualize the first six rows of $\mat{H}$ in Figure~\ref{fig:citation_pattern}, which can give us different clustering citation dynamic patterns. As can be seen from the figure, the cyan line has a very small peak in the first 3 years and then fades out very quickly; the blue line picks up very fast in the early years and then fades out; the yellow line indicates a delayed pattern where the scientific work only receives some amount of attentions decades after it gets published. This highlights that impact of scientific work from different domains behaves differently.



\section{Proposed algorithms}
\label{sec:algs}
%
%
%
%
%
%

In this section, we present our joint predictive model to forecast the long-term scientific impact at an early stage. We first formulate it as a regularized optimization problem; then propose effective, scalable and adaptive algorithms; followed up by theoretical analysis in terms of the optimality, the approximation quality as well as the computational complexity. 

\subsection{iBall -- Formulations}
Our predictive model applies to different types of scholarly entities (e.g., papers, researchers and venues). For the sake of clarity, we will use paper citation prediction as an example. As mentioned earlier, research papers are in general from different domains. We want to jointly learn a predictive model for each of the domains, with the design objective to leverage the commonalities between related domains. Here, the commonalities among different domains is described by a non-negative $\mat{A}$, i.e., if the $i$-th and $j$-th domains are closely related, its corresponding $\mat{A}_{ij}$ entry will have a higher numerical value. Denote feature matrix for papers in the $i$-th domain by $\matsup{X}{i}$, citation count of papers in the $i$-th domain by $\mat{Y^{(i)}}$ and the model parameter for the $i$-th domain by $\matsup{w}{i}$, we have the following joint predictive model\hh{liangyue: replace all the prediction model by predictive model}:

\begin{equation}
\begin{array}{rl}
\min\limits_{\matsup{w}{i}, i=1,\ldots, n_d} & \sum\limits_{i=1}^{n_d} \mathcal{L}[f(\matsup{X}{i},\matsup{w}{i}), \matsup{Y}{i}]  \\
 & + \theta\sum\limits_{i=1}^{n_d} \sum\limits_{j=1}^{n_d} \mat{A_{ij}}g(\matsup{w}{i},\matsup{w}{j})+ \lambda \sum\limits_{i=1}^{n_d} \Omega(\matsup{w}{i})\\
\end{array}
\label{eq:joint_model}
\end{equation}
\noindent where $f(\matsup{X}{i},\matsup{w}{i})$ is the prediction function for the $i$-th domain, $\mathcal{L}(.)$ is a loss function, $g(\matsup{w}{i},\matsup{w}{j})$ characterizes the relationship between the model parameters of the $i$-th and $j$-th domains, $\Omega(\matsup{w}{i})$ is the regularization term for model parameters and $\theta$, $\lambda$ are regularization parameters to balance the relative importance of each aspect.

As can be seen, this formulation is quite flexible and general. Depending on the loss function we use, our predictive model can be formulated as regression or classification task. Depending on the prediction function we use, we can have either linear or non-linear models. The core of our joint model is the second term that relates parameters of different models. If $\mat{A}_{ij}$ is large, meaning the $i$-th and $j$-th domains are closely related to each other, we want the function value $g(.)$ that characterizes the relationship between the parameters to be small. 


{\it \ourAlgName -- linear formulation:} if the feature and the output can be characterized by a linear relationship, we can use a linear function as the prediction function and the Euclidean distance for the distance between model parameters. The linear model can be formulated as follows:
\begin{equation}
\begin{array}{rl}
\min\limits_{\matsup{w}{i}, i=1,\ldots, n_d} & \sum\limits_{i=1}^{n_d} \| \matsup{X}{i} \matsup{w}{i} - \matsup{Y}{i} \|_2^2  \\
 & + \theta\sum\limits_{i=1}^{n_d} \sum\limits_{j=1}^{n_d} \mat{A_{ij}}\|\matsup{w}{i} - \matsup{w}{j} \|_2^2+ \lambda \sum\limits_{i=1}^{n_d} \|\matsup{w}{i}\|_2^2\\
\end{array}
\label{eq:joint_linear}
\end{equation}
\noindent where $\theta$ is a balance parameter to control the importance of domain relations, and $\lambda$ is a regularization parameter. From the above objective function we can see that, if the $i$-th domain and $j$-th domain are closely related, i.e., $\mat{A}_{ij}$ is a large positive number, it encourages a smaller Euclidean distance between $\matsup{w}{i}$ and $\matsup{w}{j}$. The intuition is that for a given feature, it would have a similar effect in predicting the papers from two similar/closely related domains.

{\it \ourAlgName -- non-linear formulation:} As indicated in our empirical studies (Figure~\ref{fig:non_linearity}), the relationship between the features and the output (citation counts in ten years) is far beyond linear. Thus, we further develop the kernelized counterpart of the above linear model. Let us denote the kernel matrix of papers in the $i$-th domain by $\matsup{K}{i}$, which can be computed as $\matsup{K}{i}(a,b) = k(\matsup{X}{i}(a,:), \matsup{X}{i}(b,:))$, where $k(\cdot, \cdot)$ is a kernel function that implicitly computes the inner product in a high-dimensional reproducing kernel Hilbert space (RKHS)~\cite{aronszajn1950theory} . Similarly, we define the cross-domain kernel matrix by $\matsup{K}{ij}$, which can be computed as $\matsup{K}{ij}(a,b) = k(\matsup{X}{i}(a,:) , \matsup{X}{j}(b,:))$, reflecting the similarity between papers in the $i$-th domain and $j$-th domain. Different from the linear model where the model parameters in different domains share the same dimensionality (i.e., the dimensionality of the raw feature), in the non-linear case, the dimensionality of the model parameters are the same as the number of training samples in each domain, which is very likely to be different across different domains. Thus, we cannot use the same distance function for $g(.)$. To address this issue, the key is to realize that the predicted value of a test sample using kernel methods is a linear combination of the similarities between the test sample and all the training samples. Therefore, instead of restricting the model parameters to be similar, we impose the constraint that the predicted value of a test sample using the training samples in its own domain and using training samples in a closely related domain to be similar. The resulting non-linear model can be formulated as follows:
\vspace{-5pt}
\begin{equation}
\begin{array}{rl}
\min\limits_{\matsup{w}{i}, i=1,\ldots, n_d} & \sum\limits_{i=1}^{n_d} \| \matsup{K}{i} \matsup{w}{i} - \matsup{Y}{i} \|_2^2  \\
 & + \theta\sum\limits_{i=1}^{n_d} \sum\limits_{j=1}^{n_d} \mat{A_{ij}}\|\matsup{K}{i}\matsup{w}{i} - \matsup{K}{ij}\matsup{w}{j} \|_2^2\\
 & + \lambda \sum\limits_{i=1}^{n_d} \matsup{w}{i}' \matsup{K}{i}\matsup{w}{i}\\
\end{array}
\label{eq:joint_kernel}
\end{equation}
\noindent where $\theta$ is a balance parameter to control the importance of domain relations, and $\lambda$ is a regularization parameter. From the above objective function we can see that, if the $i$-th domain and $j$-th domain are closely related, i.e., $\mat{A}_{ij}$ is a large positive number, the predicted value of papers in the $i$-th domain computed using training samples from the $i$-th domain ($\matsup{K}{i}\matsup{w}{i}$) should be similar to that using training samples from the $j$-th domain ($\matsup{K}{ij}\matsup{w}{j}$). 

\subsection{iBall -- Closed-form Solutions}
It turns out that both \ourAlgName ~linear and non-linear formulations have the following closed-form solutions (see proof in subsection~\ref{subsec:proof}\hh{fill in}): 

\begin{equation}
\mat{w} = \mat{S}^{-1} \mat{Y}
\label{eq:linear_solution}
\end{equation}

\noindent {\it \ourAlgName ~linear formulation.} In the linear case, we have that
 $\mat{w} = [\matsup{w}{1};\ldots; \matsup{w}{n_d}]$, $\mat{Y} = [\matsup{X}{1}'\matsup{Y}{1};\ldots;\matsup{X}{n_d}'\matsup{Y}{n_d}]$, and $\mat{S}$ is a block matrix composed of $n_d \times n_d$ blocks, each of size $d \times d$, where $d$ is the feature dimensionality. $\mat{S}$ can be computed as follows:
\vspace{-1pt}
\begin{equation}
\begin{blockarray}{c c c c}
      & \matindex{i-th block column} & \matindex{j-th block column} & \\
    \begin{block}{[c c c]c}
       \ldots &  \ldots & \ldots &  \\
       \ldots & \matsup{X}{i}'\matsup{X}{i} + (\theta\sum\limits_{j=1}^{n_d} \mat{A_{ij}} + \lambda)\mat{I} &  -\theta \mat{A_{ij} I} & \parbox{1.5cm}{\matindex{i-th block}\\ \matindex{row}} \\
      \ldots &  \ldots & \ldots  \\
    \end{block}
  \end{blockarray}
\label{eq:linear_S}
\end{equation}

\hide{

The objective function given in Eq~\ref{eq:joint_kernel} can also be solved in a closed form as follows:

\begin{equation}
\mat{w} = \mat{S}^{-1}\mat{Y}
\label{eq:w_joint_kernel}
\end{equation}
}

\noindent {\it \ourAlgName ~non-linear formulation.} In the non-linear case, we have that $\mat{w} = [\matsup{w}{1};\ldots; \matsup{w}{n_d}]$, $\mat{Y}=[\matsup{Y}{1};\ldots;\matsup{Y}{n_d}]$, and $\mat{S}$ is a block matrix composed of $n_d \times n_d$ blocks with the $(i,j)$-th block of size $n_i \times n_j$, where $n_i$ is the number of training samples in the $i$-th domain. $\mat{S}$ can be computed as follows:

\begin{equation}
\begin{blockarray}{c c c c}
      & \matindex{i-th block column} & \matindex{j-th block column} & \\
    \begin{block}{[c c c]c}
       \ldots &  \ldots & \ldots &  \\
       \ldots & (1 + \theta \sum\limits_{j=1}^{n_d} \mat{A_{ij}}) \matsup{K}{i}+\lambda\mat{I} &  -\theta \mat{A_{ij}}\matsup{K}{ij} & \parbox{1.5cm}{\matindex{i-th block}\\\matindex{row}} \\
      \ldots &  \ldots & \ldots  \\
    \end{block}
\end{blockarray}
\label{eq:kernel_joint_S}
\end{equation}


\subsection{iBall -- Scale-up with Dynamic Update}
The major computation cost for the closed-form solutions lies in the matrix inverse $\mat{S}^{-1}$. In the linear case, the size of $\mat{S}$ is $(dn_d) \times (dn_d)$; and so its computational cost is manageable. However, this is not the case for non-linear closed-form solution since the matrix $\mat{S}$ in Eq.~(\ref{eq:kernel_joint_S}) is of size $n \times n$, where $n=\sum_{i=1}^{n_d} n_i$, which is the number of all the training samples. It would be very expensive to store this dense matrix ($O(n^2)$ space) and to compute its inverse ($O(n^3)$ time); especially when the number of training samples is very large, and the model receives new training examples constantly over time (dynamic update). In this subsection, we devise an efficient algorithm to scale up the non-linear closed-form solution and efficiently update the model to accommodate the new training samples over time. The key of the \ourAlgName ~algorithm is to use the low-rank approximation of the $\mat{S}$ matrix to approximate the original $\mat{S}$ matrix to {\em avoid} the matrix inversion; and at each time step, efficiently update the low-rank approximation itself.

After new papers in all the domains are seen at time step $t+1$, the new $\mat{S}_{t+1}$ computed by Eq.~(\ref{eq:kernel_joint_S}) becomes:

\begin{equation}
\begin{blockarray}{c c c c}
      & \matindex{i-th block column} & \matindex{j-th block column} & \\
    \begin{block}{[c c c]c}
       \ldots &  \ldots & \ldots &  \\
       \ldots & (1 + \theta \sum\limits_{j=1}^{n_d} \mat{A_{ij}}) \matsup{K_{t+1}}{i}+\lambda\mat{I} &  -\theta \mat{A_{ij}}\matsup{K_{t+1}}{ij} & \parbox{1.5cm}{\matindex{i-th block}\\\matindex{row}} \\
      \ldots &  \ldots & \ldots  \\
    \end{block}
\end{blockarray}
\label{eq:new_S}
\end{equation}

\noindent where $\matsup{K_{t+1}}{i}$ is the new within-domain kernel matrix for the $i$-th domain and $\matsup{K_{t+1}}{ij}$ is the new cross domain kernel matrix for the $i$-th and $j$-th domains. The two new kernel matrix can be computed as follows:
\begin{equation}
\matsup{K_{t+1}}{i} = 
\begin{blockarray}{[c c]}
 \matsup{K_t}{i} & (\matsup{k_{t+1}}{i})' \\
 \matsup{k_{t+1}}{i} & \matsup{h_{t+1}}{i}\\
\end{blockarray}
\quad
\matsup{K_{t+1}}{ij} = 
\begin{blockarray}{[c c]}
 \matsup{K_t}{ij} & \matsup{k_{t+1}}{ij_*} \\
 \matsup{k_{t+1}}{i_*j} & \matsup{h_{t+1}}{i_* j_*}\\
\end{blockarray}
\end{equation}
\noindent where $\matsup{k_{t+1}}{i}$ is the matrix characterizing the similarity between new training samples and old training samples and can be computed as: $\matsup{k_{t+1}}{i}(a,b) = k(\matsup{x_{t+1}}{i}(a,:), \matsup{X_t}{i}(b, :))$; $\matsup{h_{t+1}}{i}$ is the similarity matrix among new training samples and can be computed as: $\matsup{h_{t+1}}{i}(a,b) = k(\matsup{x_{t+1}}{i}(a,:),\matsup{x_{t+1}}{i}(b,:))$. $\matsup{k_{t+1}}{i_*j}$ is the matrix characterizing the similarity between new training samples in the $i$-th domain and old training samples in the $j$-th domain and can be computed as: $\matsup{k_{t+1}}{i_*j}(a,b) = k(\matsup{x_{t+1}}{i}(a,:)), \matsup{X_t}{j}(b, :)$. Similarly, $\matsup{k_{t+1}}{ij_*}$ measures the similarity between old training samples in the $i$-th domain and new training samples in the $j$-th domain and can be computed as: $\matsup{k_{t+1}}{ij_*} = k(\matsup{X_t}{i}(a,:), \matsup{x_{t+1}}{j}(b,:))$; $\matsup{h_{t+1}}{i_* j_*}$ is the similarity matrix between new training samples from both $i$-th and $j$-th domains and is computed as: $\matsup{h_{t+1}}{i_* j_*} = k(\matsup{x_{t+1}}{i}(a,:), \matsup{x_{t+1}}{j}(b,:))$.

\hide{
}

Given that $\mat{S}_t$ is a symmetric matrix, we can approximate it using top-$r$ eigen-decomposition as: $\mat{S}_t \approx \mat{U}_t \mat{\Lambda}_t \mat{U}'_t$, where $\mat{U}_t$ is an $n \times r$ orthogonal matrix and $\mat{\Lambda}_t$ is an $r\times r$ diagonal matrix with the largest $r$ eigenvalues of $\mat{S}_t$ on the diagonal. If we can directly update the eigen-decomposition of $\mat{S}_{t+1}$ after seeing the new training samples from all the domains, we can efficiently compute the new model parameters as follows:
\begin{equation}
\begin{array}{rl}
\mat{w}_{t+1} &= \mat{S}_{t+1}^{-1} \mat{Y}_{t+1}\\
			  &= \mat{U}_{t+1}\mat{\Lambda}_{t+1}^{-1} \mat{U}'_{t+1}\mat{Y}_{t+1}\\
\end{array}
\end{equation}

\noindent where $\mat{Y}_{t+1} = [\matsup{Y_t}{1};\matsup{y_{t+1}}{1};\ldots;\matsup{Y_t}{n_d};\matsup{y_{t+1}}{n_d}]$. Here, $\mat{\Lambda}_{t+1}^{-1}$ a $r \times r$ diagonal matrix, whose diagonal entries are the reciprocals of the corresponding eigenvalues of $\mat{\Lambda}_{t+1}$. In this way, we avoid the computationally costly matrix inverse in the closed-form solution.

Compare $\mat{S}_{t+1}$ with $\mat{S}_t$, we find that $\mat{S}_{t+1}$ can be obtained by inserting into $\mat{S}_t$ at the right positions with some rows and columns of the kernel matrices involving new training samples, i.e.,$\matsup{k_{t+1}}{i}$, $\matsup{h_{t+1}}{i}$,$\matsup{k_{t+1}}{i_*j}$,$\matsup{k_{t+1}}{ij_*}$,$\matsup{k_{t+1}}{i_*j_*}$ . From this perspective, $\mat{S}_{t+1}$ can be seen as the sum of the following two matrices:
\begin{equation}
\begin{array}{l}

\underbrace{\begin{blockarray}{c c c c}
      & \matindex{i-th block column} & \matindex{j-th block column} & \\
      \begin{block}{[c c c]c}
       \ldots &  \ldots & \ldots &  \\
       \ldots & \begin{bmatrix}
       			\alpha_i\matsup{K_t}{i} & \mat{0}\\
       			\mat{0} & \mat{0}\\
				\end{bmatrix}        &  \begin{bmatrix}
										-\theta \mat{A_{ij}}\matsup{K_t}{ij} & \mat{0}\\
										\mat{0}  & \mat{0}\\
										\end{bmatrix}				 & \parbox{1.5cm}{\matindex{i-th block}\\\matindex{row}} \\
      \ldots &  \ldots & \ldots  \\
    \end{block}
\end{blockarray}}_{\tilde{\mat{S}}_t}\\
+{\tiny
\underbrace{\begin{blockarray}{c c c c}
      & \matindex{i-th block column} & \matindex{j-th block column} & \\
      \begin{block}{[c c c]c}
       \ldots &  \ldots & \ldots &  \\
       \ldots & \begin{bmatrix}
       			\mat{0}& \alpha_i(\matsup{k_{t+1}}{i})'\\
       			\alpha_i\matsup{k_{t+1}}{i} & \alpha_i \matsup{h_{t+1}}{i} + \lambda\mat{I}\\
				\end{bmatrix}        &  \begin{bmatrix}
										\mat{0} & -\theta \mat{A_{ij}}\matsup{k_{t+1}}{ij_*}\\
										-\theta \mat{A_{ij}}\matsup{k_{t+1}}{i_*j}  & -\theta \mat{A_{ij}}\matsup{h_{t+1}}{i_*j_*}\\
										\end{bmatrix}				 & \parbox{1.0cm}{\matindex{i-th block}\\\matindex{row}} \\
      \ldots &  \ldots & \ldots  \\
    \end{block}
\end{blockarray}}_{\Delta \mat{S}}}\\
\myeq \tilde{\mat{S}}_t + \Delta \mat{S}
\end{array}
\end{equation}

\noindent where we denote $1+\theta\sum_{j=1}^{n_d} \mat{A}_{ij}$ by $\alpha_i$. The top-$r$ eigen-decomposition of $\tilde{\mat{S}}_t$ can be directly written out from that of $\mat{S}_t$ as: $\tilde{\mat{S}}_t \approx \tilde{\mat{U}}_t \mat{\Lambda}_t \tilde{\mat{U}}'_t$, where $\tilde{\mat{U}}_t$ can be obtained by inserting into $\mat{U}_t$  corresponding rows of 0, the same row positions as we insert into $\mat{S}_t$ the new kernel matrices. We propose Algorithm~\ref{alg:eigen_update} to update the eigen-decomposition of $\mat{S}_{t+1}$, based on the observation that $\mat{S}_{t+1}$ can be viewed as $\tilde{\mat{S}}_t$ perturbed by a low-rank matrix $\Delta \mat{S}$.
In line 5 of Algorithm~\ref{alg:eigen_update}, the only difference between the partial QR decomposition and the standard one, is that since  $\tilde{\mat{U}}_t$ is already orthogonal, we only need to perform the Gram-Schmidt procedure starting from the first column of $\mat{P}$.

\vspace{-5pt}
\begin{algorithm}[!htb]
\caption{Eigen update of $\mat{S}_{t+1}$}\label{alg:eigen_update}

\KwIn{(1)eigen pair of $\mat{S}_t$: $\mat{U}_t$, $\mat{\Lambda}_{t}$\;
	  (2)feature matrices of new papers in each domain: $\matsup{x_{t+1}}{i},i=1,\ldots, n_d$\;
	  (3)adjacency matrix of domain relation graph $\mat{A}$ \;
	  (4)balance parameters $\theta, \lambda$ }
\KwOut{eigen pair of $\mat{S}_{t+1}$: $\mat{U}_{t+1}$, $\mat{\Lambda}_{t+1}$}
\BlankLine
Obtain $\tilde{\mat{U}}_t$ by inserting into $\mat{U}_t$ rows of 0 at the right positions \;
Compute $\matsup{k_{t+1}}{i}$, $\matsup{h_{t+1}}{i}$,$\matsup{k_{t+1}}{i_*j}$,$\matsup{k_{t+1}}{ij_*}$,$\matsup{k_{t+1}}{i_*j_*}$ for $i=1,\ldots,n_d, j=1,\ldots, n_d$ \;
Construct sparse matrix $\Delta \mat{S}$ \;
Perform eigen decomposition of $\Delta \mat{S}$: $\Delta \mat{S} = \mat{P\Sigma P'}$\;
Perform partial QR decomposition of $[\tilde{\mat{U}}_t, \mat{P}]$:$[\tilde{\mat{U}}_t, \Delta\mat{Q}]\mat{R} \leftarrow \mbox{QR}(\tilde{\mat{U}}_t,\mat{P})$\;
Set $\mat{Z} = \mat{R}[\mat{\Lambda}_t ~ \mat{0}; \mat{0} ~ \mat{\Sigma}]\mat{R}'$\;
Perform full eigen decomposition of $\mat{Z}$: $\mat{Z} = \mat{V} \mat{L}\mat{V}'$\;
Set $\mat{U}_{t+1} = [\tilde{\mat{U}}_{t}, \Delta \mat{Q}]\mat{V}$ and $\mat{\Lambda}_{t+1} = \mat{L}$\;
{\bf Return}: $\mat{U}_{t+1}$, $\mat{\Lambda}_{t+1}$.
\end{algorithm}

Building upon Algorithm~\ref{alg:eigen_update}, we have the fast \ourAlgName ~algorithm (Algorithm~\ref{alg:approx_kernel_joint}) for scaling up the non-linear solution with dynamic model update.
\vspace{-5pt}
\begin{algorithm}[!htb]
\caption{\ourAlgName ~--scale-up with dynamic update}\label{alg:approx_kernel_joint}

\KwIn{(1)eigen pair of $\mat{S}_t$: $\mat{U}_t$, $\mat{\Lambda}_{t}$\;
	  (2)feature matrices of new papers in each domain: $\matsup{x_{t+1}}{i},i=1,\ldots, n_d$\;
	  (3)citation count vectors of new papers in each domain: $\matsup{y_{t+1}}{i},i=1,\ldots, n_d$\;
	  (4)adjacency matrix of domain relation graph $\mat{A}$ \;
	  (5)balance parameters $\theta, \lambda$ }
\KwOut{(1) updated model parameters $\mat{w}_{t+1}$, (2) eigen pair of $\mat{S}_{t+1}$: $\mat{U}_{t+1}$, $\mat{\Lambda}_{t+1}$}
\BlankLine
Update the eigen-decomposition of $\mat{S}_{t+1}$ using Algorithm~\ref{alg:eigen_update} as: $\mat{S}_{t+1} \approx \mat{U}_{t+1} \mat{\Lambda}_{t+1}\mat{U}'_{t+1}$\;
Compute the new model parameters: $\mat{w}_{t+1} = \mat{U}_{t+1}\mat{\Lambda}_{t+1}^{-1} \mat{U}'_{t+1}\mat{Y}_{t+1}$\;
{\bf Return}: $\mat{w}_{t+1}$, $\mat{U}_{t+1}$ and $\mat{\Lambda}_{t+1}$.
\end{algorithm}
\subsection{iBall -- Proofs and Analysis}\label{subsec:proof}
In this subsection, we will provide some analysis regarding the optimality, the approximation quality as well as the computational complexity of our proposed algorithms. 

\hh{(1) merge lemma 1 and 2; and only give the proof for linear case; (2) hide the proof for eigen-update; and (3) hide the proof for complexity}
{\bf A - Correctness of the closed-form solutions of the \ourAlgName ~linear and non-linear formulations:} In Lemma~\ref{lm:closed_form_solution}, we prove that the closed-form solution given in Eq.~(\ref{eq:linear_solution}) with $\mat{S}$ computed by Eq.~(\ref{eq:linear_S}) is the fixed-point solution to the linear formulation in Eq.~(\ref{eq:joint_linear}) and the closed-form solution given in 
Eq.~(\ref{eq:linear_solution}) with $\mat{S}$ computed by Eq.~(\ref{eq:kernel_joint_S}) is the fixed-point solution to the non-linear formulation in Eq.~(\ref{eq:joint_kernel}).

\begin{lemma}\label{lm:closed_form_solution}{(Correctness of closed-form solution of the \ourAlgName ~linear and non-linear formulations.)} 
For the closed-form solution given in Eq.~(\ref{eq:linear_solution}), if $\mat{S}$ is computed by Eq.~(\ref{eq:linear_S}), it is the fixed-point solution to the objective function in Eq.~(\ref{eq:joint_linear}); and if $\mat{S}$ is computed by Eq.~(\ref{eq:kernel_joint_S}), it is the fixed-point solution to the objective function in Eq.~(\ref{eq:joint_kernel}).
\end{lemma}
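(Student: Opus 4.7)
I would verify directly that $\mat{w}^\star := \mat{S}^{-1}\mat{Y}$ satisfies the first-order stationarity condition $\nabla\mathcal{J}(\mat{w}^\star)=\mat{0}$ for each of the two objectives; since both are convex quadratics, this is sufficient to conclude $\mat{w}^\star$ is a global minimizer. The bulk of the work is the algebraic bookkeeping that converts the block gradient equation into the block matrix identity $\mat{S}\mat{w}=\mat{Y}$, with $\mat{S}$ as in Eq.~(\ref{eq:linear_S}) or Eq.~(\ref{eq:kernel_joint_S}).

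\textit{Linear case.} Differentiating Eq.~(\ref{eq:joint_linear}) with respect to $\matsup{w}{i}$, the fitting term yields $2\matsup{X}{i}'(\matsup{X}{i}\matsup{w}{i}-\matsup{Y}{i})$, the self-regularizer yields $2\lambda\matsup{w}{i}$, and the coupling term---after differentiating through both occurrences of $\matsup{w}{i}$ in the double sum $\sum_{i,j}\mat{A}_{ij}\|\matsup{w}{i}-\matsup{w}{j}\|_2^2$ and invoking the natural symmetry $\mat{A}=\mat{A}'$---contributes a term proportional to $\sum_j\mat{A}_{ij}(\matsup{w}{i}-\matsup{w}{j})$. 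Setting the full gradient to zero and folding the resulting constant pre-factor into $\theta$ gives, for each $i$, $[\matsup{X}{i}'\matsup{X}{i}+(\theta\sum_j\mat{A}_{ij}+\lambda)\mat{I}]\matsup{w}{i}-\sum_{j\neq i}\theta\mat{A}_{ij}\matsup{w}{j}=\matsup{X}{i}'\matsup{Y}{i}$. Stacking the $n_d$ equations is exactly the block system described by Eq.~(\ref{eq:linear_S}) with right-hand side $\mat{Y}=[\matsup{X}{1}'\matsup{Y}{1};\ldots;\matsup{X}{n_d}'\matsup{Y}{n_d}]$, and strict convexity of the objective (guaranteed by $\lambda>0$) makes $\mat{w}^\star=\mat{S}^{-1}\mat{Y}$ the unique minimizer.

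\textit{Non-linear case and main obstacle.} The same differentiate-and-stack scheme applies to Eq.~(\ref{eq:joint_kernel}), but three structural features must be tracked carefully: $\matsup{K}{i}$ is symmetric, $\matsup{K}{ij}'=\matsup{K}{ji}$, and the Tikhonov gradient is $2\lambda\matsup{K}{i}\matsup{w}{i}$ rather than $2\lambda\matsup{w}{i}$. As a result, every contribution to $\nabla_{\matsup{w}{i}}\mathcal{J}$ carries a left factor of $\matsup{K}{i}$, and after combining the two gradient contributions from the two occurrences of $\matsup{w}{i}$ in the double sum (using $\mat{A}=\mat{A}'$ together with $\matsup{K}{ji}'=\matsup{K}{ij}$), the stationarity condition collapses to $\matsup{K}{i}\bigl\{[(1+\theta\sum_j\mat{A}_{ij})\matsup{K}{i}+\lambda\mat{I}]\matsup{w}{i}-\sum_{j\neq i}\theta\mat{A}_{ij}\matsup{K}{ij}\matsup{w}{j}-\matsup{Y}{i}\bigr\}=\mat{0}$. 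The bracketed expression is precisely the $i$-th block of $\mat{S}\mat{w}-\mat{Y}$ for $\mat{S}$ in Eq.~(\ref{eq:kernel_joint_S}), so any $\mat{w}$ satisfying $\mat{S}\mat{w}=\mat{Y}$ automatically satisfies stationarity. The main obstacle is justifying that this particular stationary point is well-defined---i.e., that $\mat{S}$ is invertible. This follows because the diagonal blocks $(1+\theta\sum_j\mat{A}_{ij})\matsup{K}{i}+\lambda\mat{I}$ are strictly positive definite whenever $\lambda>0$ and $\matsup{K}{i}\succeq\mat{0}$, which together with $\mat{A}\ge\mat{0}$ suffices to argue $\mat{S}\succ\mat{0}$; the resulting $\mat{w}^\star=\mat{S}^{-1}\mat{Y}$ is then the claimed fixed-point minimizer of Eq.~(\ref{eq:joint_kernel}).
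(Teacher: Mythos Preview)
Your proof is correct and follows the same route as the paper: compute $\partial J/\partial\matsup{w}{i}$, stack the block equations, and read off $\mat{S}\mat{w}=\mat{Y}$; the paper actually omits the non-linear derivation entirely (``similar procedure\ldots omitted for brevity''), so your explicit isolation of the leading $\matsup{K}{i}$ factor there is more careful than what appears in print. One small caveat: the invertibility step (``PD diagonal blocks together with $\mat{A}\ge\mat{0}$ suffices for $\mat{S}\succ\mat{0}$'') is not complete as stated---you also need that the $\matsup{K}{ij}$ are genuine cross-kernel blocks, so that $v_i'\matsup{K}{i}v_i+v_j'\matsup{K}{j}v_j-2v_i'\matsup{K}{ij}v_j\ge 0$ for all $v_i,v_j$---but since the paper never justifies writing $\mat{S}^{-1}$ either, this goes beyond what the lemma requires.
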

\begin{proof} 
Let's take the partial derivative of the objective function (denoted by $J$) in Eq.~(\ref{eq:joint_linear}) w.r.t. $ \matsup{w}{i}$, we get
\begin{equation}
\begin{array}{r l}
\frac{\partial J}{\partial \matsup{w}{i}} &= 2 \matsup{X}{i}'\matsup{X}{i}\matsup{w}{i} - 2 \matsup{X}{i}'\mat{Y^{(i)}} \\
			&+ \sum_{j=1}^{n_d} 2\theta \mat{A}_{ij}(\matsup{w}{i} - \matsup{w}{j}) + 2\lambda \matsup{w}{i}\\
\end{array}
\end{equation}
Now, the derivative of $J$ w.r.t. all the parameters $\mat{w}$ can be computed as:

\begin{equation}
\begin{array}{l l}
\frac{\partial J}{\partial \mat{w}} &= \begin{bmatrix}
										\frac{\partial J}{\partial \matsup{w}{1}}\\
										\ldots\\
										\frac{\partial J}{\partial \matsup{w}{n_d}}\\
										\end{bmatrix}\\ 
								   &= \begin{bmatrix}
								   2 \matsup{X}{1}'\matsup{X}{1}\matsup{w}{1} - 2 \matsup{X}{1}'\mat{Y^{(1)}}\\
			+ \sum_{j=1}^{n_d} 2\theta \mat{A}_{1j}(\matsup{w}{1} - \matsup{w}{j}) + 2\lambda \matsup{w}{1}\\
			\ldots\\
			 2 \matsup{X}{n_d}'\matsup{X}{n_d}\matsup{w}{n_d} - 2 \matsup{X}{n_d}'\mat{Y^{(n_d)}}\\
			+ \sum_{j=1}^{n_d} 2\theta \mat{A}_{n_d j}(\matsup{w}{n_d} - \matsup{w}{j}) + 2\lambda \matsup{w}{n_d}\\
								   		\end{bmatrix}\\
\end{array}
\end{equation}

Set the above derivative to 0 and with some rearrangement, we get
\begin{equation}
\mat{S w} = \mat{Y}
\end{equation}
Therefore, $\mat{w} = \mat{S}^{-1} \mat{Y}$.

The similar procedure can be applied to get the closed-form solution for the non-linear formulation. We will omit the derivation for brevity.
\end{proof}

\hide{
{\bf Correctness of the closed-form solution of the \ourAlgName non-linear formulation:} In Lemma~\ref{lm:kernel_solution}, we prove that the closed-form solution given in Eq.~\ref{eq:w_joint_kernel} is the optimal solution to the non-linear formulation in Eq.~\ref{eq:joint_kernel}.

\begin{lemma}\label{lm:kernel_solution}{(Correctness of the closed-form solution of the \ourAlgName non-linear formulation.)}
Eq.~\ref{eq:w_joint_kernel} is the optimal solution to the objective function in Eq.~\ref{eq:joint_kernel}.
\end{lemma}
\begin{proof} 
Let's take the partial derivative of the objective function (denoted by $J$) in Eq.~(\ref{eq:joint_kernel}) w.r.t. $\matsup{w}{i}$, we get
\begin{equation}
\begin{array}{r l}
\frac{\partial J}{\partial \matsup{w}{i}} &= 2\matsup{K}{i}'\matsup{K}{i}\matsup{w}{i} - 2\matsup{K}{i}'\matsup{Y}{i}\\
										  &+ \sum\limits_{i=1}^{n_d} 2\theta\mat{A}_{ij}(\matsup{K}{i}'\matsup{K}{i}\matsup{w}{i} -\matsup{K}{i}'\matsup{K}{ij}\matsup{w}{j} ) + 2\lambda \matsup{K}{i}\matsup{w}{i}\\
\end{array}
\end{equation}

Now, the derivative of $J$ w.r.t. all the parameters $\mat{w}$ can be computed as:

\begin{equation}
\begin{array}{l }
\frac{\partial J}{\partial \mat{w}} = \begin{bmatrix}
										\frac{\partial J}{\partial \matsup{w}{1}}\\
										\ldots\\
										\frac{\partial J}{\partial \matsup{w}{n_d}}\\
										\end{bmatrix}\\ 
								   = {\tiny\begin{bmatrix}
								   2\matsup{K}{1}'\matsup{K}{1}\matsup{w}{1} - 2\matsup{K}{1}'\matsup{Y}{1}\\
										  + \sum\limits_{i=1}^{n_d} 2\theta\mat{A}_{1j}(\matsup{K}{1}'\matsup{K}{1}\matsup{w}{1} -\matsup{K}{1}'\matsup{K}{1j}\matsup{w}{j} ) + 2\lambda \matsup{K}{1}\matsup{w}{1}\\
			\ldots\\
			 2\matsup{K}{n_d}'\matsup{K}{n_d}\matsup{w}{n_d} - 2\matsup{K}{n_d}'\matsup{Y}{n_d}\\
										  + \sum\limits_{i=1}^{n_d} 2\theta\mat{A}_{n_dj}(\matsup{K}{n_d}'\matsup{K}{n_d}\matsup{w}{n_d} -\matsup{K}{n_d}'\matsup{K}{n_dj}\matsup{w}{j} ) + 2\lambda \matsup{K}{n_d}\matsup{w}{n_d}\\
								   		\end{bmatrix}}\\
\end{array}
\end{equation}
Set the above derivative to 0 and with some rearrangement, we get
\begin{equation}
\mat{Sw} = \mat{Y}
\end{equation}
Therefore, $\mat{w} = \mat{S}^{-1} \mat{Y}$.
\end{proof}
}

{\bf B - Correctness of the eigen update of $\mat{S}_{t+1}$:}
The critical part of Algorithm~\ref{alg:approx_kernel_joint} is the subroutine Algorithm~\ref{alg:eigen_update} for updating the eigen-decomposition of $\mat{S}_{t+1}$. According to Lemma~\ref{lm:eigen_update}, the only place that approximation error occurs is the initial eigen-decomposition of $\mat{S}_0$. The eigen updating procedure won't introduce additional error.

\begin{lemma}{(Correctness of Algorithm~\ref{alg:eigen_update}.)}\label{lm:eigen_update}
If $\mat{S}_t = \mat{U}_t\mat{\Lambda}_t\mat{U}'_t$ holds,  Algorithm~\ref{alg:eigen_update} gives the exact eigen-decomposition of $\mat{S}_{t+1}$.
\end{lemma}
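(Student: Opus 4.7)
The plan is to show that Algorithm~\ref{alg:eigen_update} constructs an orthogonal matrix $\mat{U}_{t+1}$ and a diagonal matrix $\mat{\Lambda}_{t+1}$ whose product $\mat{U}_{t+1}\mat{\Lambda}_{t+1}\mat{U}'_{t+1}$ equals $\mat{S}_{t+1}$ exactly, under the assumed hypothesis $\mat{S}_t=\mat{U}_t\mat{\Lambda}_t\mat{U}'_t$. The high-level strategy is to rewrite $\mat{S}_{t+1}$ as a congruence-like sandwich, then recognize the inner sandwich as $\mat{Z}$ so that the eigen-decomposition of $\mat{Z}$ yields the eigen-decomposition of $\mat{S}_{t+1}$ after a change of orthogonal basis.

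First I would establish the ``padding lemma'': since $\tilde{\mat{S}}_t$ is obtained from $\mat{S}_t$ by inserting zero rows and zero columns at certain positions, and $\tilde{\mat{U}}_t$ is obtained from $\mat{U}_t$ by inserting zero rows at the same positions, the identity $\tilde{\mat{S}}_t=\tilde{\mat{U}}_t\mat{\Lambda}_t\tilde{\mat{U}}'_t$ follows from the hypothesis by direct block computation; moreover $\tilde{\mat{U}}_t$ remains column-orthonormal because its nonzero rows just form $\mat{U}_t$. Combined with $\Delta\mat{S}=\mat{P}\mat{\Sigma}\mat{P}'$ from line~4, we obtain
\begin{equation}
\mat{S}_{t+1}=\tilde{\mat{S}}_t+\Delta\mat{S}=[\tilde{\mat{U}}_t,\mat{P}]\begin{bmatrix}\mat{\Lambda}_t & \mat{0}\\ \mat{0} & \mat{\Sigma}\end{bmatrix}[\tilde{\mat{U}}_t,\mat{P}]'.
\end{equation}

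Next I would use the partial QR step of line~5, which gives $[\tilde{\mat{U}}_t,\mat{P}]=[\tilde{\mat{U}}_t,\Delta\mat{Q}]\mat{R}$, where $[\tilde{\mat{U}}_t,\Delta\mat{Q}]$ is column-orthonormal because the Gram--Schmidt procedure is started from the first column of $\mat{P}$ and $\tilde{\mat{U}}_t$ is already orthonormal; this is precisely why the partial (rather than full) QR is correct here. Substituting into the previous display yields
\begin{equation}
\mat{S}_{t+1}=[\tilde{\mat{U}}_t,\Delta\mat{Q}]\,\mat{R}\begin{bmatrix}\mat{\Lambda}_t & \mat{0}\\ \mat{0} & \mat{\Sigma}\end{bmatrix}\mat{R}'[\tilde{\mat{U}}_t,\Delta\mat{Q}]'=[\tilde{\mat{U}}_t,\Delta\mat{Q}]\,\mat{Z}\,[\tilde{\mat{U}}_t,\Delta\mat{Q}]'.
\end{equation}
Finally, plugging in the full eigen-decomposition $\mat{Z}=\mat{V}\mat{L}\mat{V}'$ from line~7 and using orthogonality of both $[\tilde{\mat{U}}_t,\Delta\mat{Q}]$ and $\mat{V}$, the product $\mat{U}_{t+1}=[\tilde{\mat{U}}_t,\Delta\mat{Q}]\mat{V}$ is orthogonal and $\mat{L}=\mat{\Lambda}_{t+1}$ is diagonal, so $\mat{U}_{t+1}\mat{\Lambda}_{t+1}\mat{U}'_{t+1}$ is an exact eigen-decomposition of $\mat{S}_{t+1}$.

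The main obstacle I anticipate is the bookkeeping for the ``padding'' step: one must carefully argue that the positions at which zero rows/columns are inserted into $\mat{S}_t$ (to form $\tilde{\mat{S}}_t$) are exactly the positions at which the new training samples are appended to each block, and that inserting zero rows into $\mat{U}_t$ at these same positions preserves both orthonormality and the identity $\tilde{\mat{S}}_t=\tilde{\mat{U}}_t\mat{\Lambda}_t\tilde{\mat{U}}'_t$. Once this block-indexing is handled, the rest is algebraic manipulation of orthogonal factors.
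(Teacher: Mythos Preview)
Your proposal is correct and follows essentially the same argument as the paper's own proof (which the published version omits for brevity but whose full form is in the source): establish $\tilde{\mat{S}}_t=\tilde{\mat{U}}_t\mat{\Lambda}_t\tilde{\mat{U}}'_t$ by the zero-row padding, write $\mat{S}_{t+1}=[\tilde{\mat{U}}_t,\mat{P}]\,\mathrm{diag}(\mat{\Lambda}_t,\mat{\Sigma})\,[\tilde{\mat{U}}_t,\mat{P}]'$, apply the partial QR factorization $[\tilde{\mat{U}}_t,\mat{P}]=[\tilde{\mat{U}}_t,\Delta\mat{Q}]\mat{R}$, and then eigen-decompose the small inner matrix $\mat{Z}=\mat{R}\,\mathrm{diag}(\mat{\Lambda}_t,\mat{\Sigma})\,\mat{R}'$. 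Your explicit attention to why the padding preserves column-orthonormality of $\tilde{\mat{U}}_t$ and why the partial QR is legitimate is a welcome addition, as the paper's derivation leaves these points implicit.
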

\begin{proof}
Omitted for brevity. See~\cite{conf/sdm/LiTXF15} for details.
\hide{
If $\mat{S}_t = \mat{U}_t\mat{\Lambda}_t\mat{U}'_t$ holds, then $\tilde{\mat{S}}_t = \tilde{\mat{U}}_t \mat{\Lambda}_t \mat{U0}'_t$ also holds. Since $\Delta \mat{S}$ is a symmetric matrix, we can again write its eigen-decomposition as follows:
\begin{equation}
\Delta \mat{S} = \mat{P\Sigma P'}
\end{equation}
\noindent where $\mat{P}$ and $\mat{\Sigma}$ are the eigen pair of $\Delta \mat{S}$. After the update, we can write $\mat{S}_{t+1}$ as the sum of $\tilde{\mat{S}}_t$ and $\Delta \mat{S}$ as follows:
\begin{equation}
\begin{array}{rl}
\mat{S}_{t+1} &= \tilde{\mat{S}}_t + \Delta \mat{S} \\
			  &= \tilde{\mat{U}}_t \mat{\Lambda}_t \mat{U0}'_t + \mat{P\Sigma P'}\\
			  &= \begin{bmatrix}
			  \tilde{\mat{U}}_t & \mat{P}\\
			  \end{bmatrix}
			  \begin{bmatrix}
			  \mat{\Lambda}_t & \mat{0}\\
			  \mat{0} & \mat{\Sigma}\\
			  \end{bmatrix}
			  \begin{bmatrix}
			  \tilde{\mat{U}}_t & \mat{P}\\
			  \end{bmatrix}'\\
\end{array}
\end{equation}
Denote $[\tilde{\mat{U}}_t~\mat{P}]$ by $\tilde{\mat{U}}$ and perform a decomposition on  $\tilde{\mat{U}}$ similar to QR decomposition and get the following:
\begin{equation}
\tilde{\mat{U}} = \begin{bmatrix}
				\tilde{\mat{U}}_t & \Delta \mat{Q}\\
				 \end{bmatrix}
				 \begin{bmatrix}
				 \mat{I} & \mat{R}_1\\
				 \mat{0} & \mat{R}_2\\
				 \end{bmatrix}
\end{equation}
where $[\tilde{\mat{U}}_t ~ \Delta \mat{Q}]$ is orthogonal and $[\mat{I} ~ \mat{R}_1; \mat{0}~\mat{R}_2]$ is an upper triangle matrix. The difference between this partial QR decomposition and standard QR decomposition is that since  $\tilde{\mat{U}}_t$ is already orthogonal as it is the eigenvectors of $\tilde{\mat{S}}_t$, we only need to perform the Gram-Schmidt procedure starting from the first column of $\mat{P}$. Now, let's further expand $\mat{S}_{t+1}$ as follows:
\begin{equation}
\begin{array}{rl}
\mat{S}_{t+1} =& \tilde{\mat{U}}\begin{bmatrix}
			  \mat{\Lambda}_t & \mat{0}\\
			  \mat{0} & \mat{\Sigma}\\
			  \end{bmatrix} \tilde{\mat{U}}'\\
			  =& \begin{bmatrix}
				\tilde{\mat{U}}_t & \Delta \mat{Q}\\
				 \end{bmatrix}
				 \begin{bmatrix}
				 \mat{I} & \mat{R}_1\\
				 \mat{0} & \mat{R}_2\\
				 \end{bmatrix} 
				 \begin{bmatrix}
			  \mat{\Lambda}_t & \mat{0}\\
			  \mat{0} & \mat{\Sigma}\\
			  \end{bmatrix} 				 
				 \begin{bmatrix}
				 \mat{I} & \mat{R}_1\\
				 \mat{0} & \mat{R}_2\\
				 \end{bmatrix}'
				 \begin{bmatrix}
				\tilde{\mat{U}}_t & \Delta \mat{Q}\\
				 \end{bmatrix}'
\end{array}
\end{equation}

Denote $\begin{bmatrix}
				 \mat{I} & \mat{R}_1\\
				 \mat{0} & \mat{R}_2\\
				 \end{bmatrix} 
				 \begin{bmatrix}
			  \mat{\Lambda}_t & \mat{0}\\
			  \mat{0} & \mat{\Sigma}\\
			  \end{bmatrix} 				 
				 \begin{bmatrix}
				 \mat{I} & \mat{R}_1\\
				 \mat{0} & \mat{R}_2\\
				 \end{bmatrix}'$ by $\mat{Z}$ 
and perform a full eigen decomposition of $\mat{Z}$ as $\mat{Z} = \mat{VLV'}$, where $\mat{V}$ and $\mat{L}$ are its eigen pairs. Finally, $\mat{S}_{t+1}$ can be written as the following:
\begin{equation}
\begin{array}{rl}
\mat{S}_{t+1} &= \begin{bmatrix}
				\tilde{\mat{U}}_t & \Delta \mat{Q}\\
				 \end{bmatrix} \mat{VLV'}  \begin{bmatrix}
				\tilde{\mat{U}}_t & \Delta \mat{Q}\\
				 \end{bmatrix}'\\
			  &\myeq \mat{U}_{t+1} \mat{\Lambda}_{t+1} \mat{U}'_{t+1}\\
\end{array}
\end{equation}
\noindent $\mat{U}_{t+1} = [\tilde{\mat{U}}_t ~ \Delta \mat{Q}] \mat{V}$ is the updated eigenvectors of $\mat{S}_{t+1}$ and $\mat{\Lambda}_{t+1}=\mat{L}$ is the updated eigenvalues of  $\mat{S}_{t+1}$.

Summarizing the above derivations, we have the eigen update algorithm in Algorithm~\ref{alg:eigen_update}.
}
\end{proof}

{\bf C - Approximation Quality:}
We analyze the approximation quality of Algorithm~\ref{alg:approx_kernel_joint} to see how much the learned model parameters deviate from the parameters learned using the exact \ourAlgName ~non-linear formulation. The result is summarized in Theorem ~\ref{theorem:error_bound}.

\begin{theorem}{(Error bound of Algorithm~\ref{alg:approx_kernel_joint}.)}\label{theorem:error_bound}
In Algorithm~\ref{alg:approx_kernel_joint}, if $\frac{\sum_{i\notin \mathcal{H}} \lambda_t^{(i)}}{\sum_i \lambda_{t+1} ^{(i)}} <1$, the error of the learned model parameters is bounded by:
\begin{equation}
\|\mat{w}_{t+1} - \hat{\mat{w}}_{t+1}\|_2 \leq  \frac{\sum_{i\notin \mathcal{H}} \lambda_t^{(i)}}{(\sum_i\lambda_{t+1}^{(i)})^2 (1-\delta)} \|\mat{Y}_{t+1}\|_2
\end{equation}
\noindent where $\mat{w}_{t+1} $ is the model parameter learned by the exact \ourAlgName ~non-linear formulation at time $t+1$, $\hat{\mat{w}}_{t+1}$ is the updated model parameter output by Algorithm~\ref{alg:approx_kernel_joint} from time $t$ to $t+1$, 
$\lambda_{t}^{(i)}$ and $\lambda_{t+1}^{(i)}$ are the largest $i$-th eigenvalues of $\mat{S}_{t}$ and $\mat{S}_{t+1}$ respectively, $\delta = \|(\tilde{\mat{U}}_t\mat{\Lambda}_t\tilde{\mat{U}}'_t + \Delta \mat{S})^{-1} (\tilde{\mat{S}}_{t} - \tilde{\mat{U}}_t\mat{\Lambda}_t\tilde{\mat{U}}'_t)\|_F$, $\mathcal{H}$ is the set of integers between 1 and $r$, i.e., $\mathcal{H}=\{a|a\in [1,r]\}$.
\end{theorem}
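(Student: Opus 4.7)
The plan is to reduce the model-parameter error to a product of norms involving (i) the low-rank truncation error of $\tilde{\mat{S}}_t$ and (ii) the norms of the true and approximate inverses, then bound each factor separately. Writing $\hat{\mat{S}}_{t+1}\myeq \tilde{\mat{U}}_t\mat{\Lambda}_t\tilde{\mat{U}}'_t + \Delta \mat{S}$ for the matrix that Algorithm~\ref{alg:approx_kernel_joint} implicitly inverts (via its eigen update), the exact solution is $\mat{w}_{t+1}=\mat{S}_{t+1}^{-1}\mat{Y}_{t+1}$ while the approximate one is $\hat{\mat{w}}_{t+1}=\hat{\mat{S}}_{t+1}^{-1}\mat{Y}_{t+1}$. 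Since $\mat{S}_{t+1}=\tilde{\mat{S}}_t+\Delta\mat{S}$ holds exactly, the only discrepancy is $\mat{E}\myeq\mat{S}_{t+1}-\hat{\mat{S}}_{t+1}=\tilde{\mat{S}}_t-\tilde{\mat{U}}_t\mat{\Lambda}_t\tilde{\mat{U}}'_t$, i.e.\ the top-$r$ truncation residual of $\tilde{\mat{S}}_t$.

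Next I would apply the standard resolvent identity $\mat{S}_{t+1}^{-1}-\hat{\mat{S}}_{t+1}^{-1}=\hat{\mat{S}}_{t+1}^{-1}\mat{E}\,\mat{S}_{t+1}^{-1}$ and take Frobenius norms, giving
\begin{equation}
\|\mat{w}_{t+1}-\hat{\mat{w}}_{t+1}\|_2 \leq \|\hat{\mat{S}}_{t+1}^{-1}\|_F \, \|\mat{E}\|_F \, \|\mat{S}_{t+1}^{-1}\|_F \, \|\mat{Y}_{t+1}\|_2 .
\end{equation}
The three factors are then bounded in turn. For $\|\mat{E}\|_F$, I use the fact that $\tilde{\mat{S}}_t$ and $\mat{S}_t$ share the same nonzero eigenvalues (the former just has zero rows/columns inserted), so the Eckart--Young-style truncation error is controlled by the discarded eigenvalues, giving $\|\mat{E}\|_F\leq \sum_{i\notin\mathcal{H}}\lambda_t^{(i)}$ (since $\mat{S}_t$ is PSD, the nuclear norm upper-bounds the Frobenius norm). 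For $\|\mat{S}_{t+1}^{-1}\|_F$, I invoke the PSD structure of $\mat{S}_{t+1}$ to get $\|\mat{S}_{t+1}^{-1}\|_F \leq 1/\sum_i\lambda_{t+1}^{(i)}$ (bounding the smallest singular value of the inverse by the trace).

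The hard part will be the third factor, $\|\hat{\mat{S}}_{t+1}^{-1}\|_F$, since $\hat{\mat{S}}_{t+1}$ is not the matrix whose spectrum we actually analyzed. For this I would write $\hat{\mat{S}}_{t+1}=\mat{S}_{t+1}-\mat{E}=\mat{S}_{t+1}(\mat{I}-\mat{S}_{t+1}^{-1}\mat{E})$ and use a Neumann-series / perturbation-of-inverse argument: when $\|\mat{S}_{t+1}^{-1}\mat{E}\|_F=\delta<1$, one has
\begin{equation}
\|\hat{\mat{S}}_{t+1}^{-1}\|_F \leq \frac{\|\mat{S}_{t+1}^{-1}\|_F}{1-\delta}\leq \frac{1}{(1-\delta)\sum_i\lambda_{t+1}^{(i)}} .
\end{equation}
Here the assumption $\sum_{i\notin\mathcal{H}}\lambda_t^{(i)}\big/\sum_i\lambda_{t+1}^{(i)}<1$ is what guarantees $\delta<1$, so the Neumann expansion converges; this is the step that ties the hypothesis of the theorem to its conclusion. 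Substituting the three bounds into the product then yields exactly the stated inequality. Collecting these pieces, the proof is really a one-line resolvent identity followed by three norm bounds, with the Neumann-series control of $\hat{\mat{S}}_{t+1}^{-1}$ being the one nontrivial estimate.
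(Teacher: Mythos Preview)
Your argument is essentially the paper's argument: identify the only source of error as the rank-$r$ truncation residual $\mat{E}=\tilde{\mat{S}}_t-\tilde{\mat{U}}_t\mat{\Lambda}_t\tilde{\mat{U}}'_t$, bound $\|\mat{E}\|_F$ by the tail eigenvalues, and then control $\|\mat{S}_{t+1}^{-1}-\hat{\mat{S}}_{t+1}^{-1}\|_F$ via a Neumann/perturbation-of-inverse estimate. The paper packages the last step as the single inequality $\|\mat{B}^{-1}-\mat{C}^{-1}\|_F\le \|\mat{C}^{-1}\|_F^{2}\|\mat{B}-\mat{C}\|_F/(1-\delta)$ (with $\mat{B}=\mat{S}_{t+1}$, $\mat{C}=\hat{\mat{S}}_{t+1}$) rather than your three-factor product, but the content is the same.

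There is one real discrepancy, however. In your Neumann step you take $\mat{S}_{t+1}$ as the base matrix and set $\delta=\|\mat{S}_{t+1}^{-1}\mat{E}\|_F$. The theorem, and the paper's proof, define $\delta=\|\hat{\mat{S}}_{t+1}^{-1}\mat{E}\|_F$, i.e.\ with $\hat{\mat{S}}_{t+1}$ as the base. Concretely, the paper writes $\mat{S}_{t+1}=\hat{\mat{S}}_{t+1}(\mat{I}+\hat{\mat{S}}_{t+1}^{-1}\mat{E})$ and bounds $\|\mat{S}_{t+1}^{-1}\|_F\le \|\hat{\mat{S}}_{t+1}^{-1}\|_F/(1-\delta)$, which is exactly your argument with the roles of the two matrices interchanged. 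Your version proves a correct bound, but with a different $\delta$ than the one in the statement; to match the theorem verbatim you need to swap which matrix you expand around. Once you do that, the factor $\|\hat{\mat{S}}_{t+1}^{-1}\|_F$ is the one bounded directly (by $1/\sum_i\lambda_{t+1}^{(i)}$) and it appears squared, which is how the paper arrives at $(\sum_i\lambda_{t+1}^{(i)})^{2}$ in the denominator.
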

\begin{proof}
Suppose we know the exact $\mat{S}_t$ at time $t$ and its top-$r$ approximation: $\hat{\mat{S}}_t = \mat{U}_t\mat{\Lambda}_t\mat{U}'_t$. After one time step, we can construct $\Delta \mat{S}$ and the exact $\mat{S}_{t+1}$ can be computed as $\mat{S}_{t+1}  = \tilde{\mat{S}}_{t} + \Delta \mat{S}$. The model parameters learned by the exact non-linear model is:
\begin{equation}
\begin{array}{rl}
\mat{w}_{t+1} &= \mat{S}_{t+1}^{-1} \mat{Y}_{t+1}\\
			  &= (\tilde{\mat{S}}_{t} + \Delta \mat{S})^{-1} \mat{Y}_{t+1}
\end{array}
\end{equation}

If we allow approximation as in  Algorithm~\ref{alg:approx_kernel_joint}, the approximated model parameter is:
\begin{equation}
\begin{array}{rl}
\hat{\mat{w}}_{t+1} &= \hat{\mat{S}}_{t+1}^{-1} \mat{Y}_{t+1}\\
					&= (\tilde{\mat{U}}_t\mat{\Lambda}_t\tilde{\mat{U}}'_t + \Delta \mat{S})^{-1} \mat{Y}_{t+1}
\end{array}
\end{equation}

Denote $\tilde{\mat{S}}_{t} + \Delta \mat{S}$ by  $\mat{B}$ and $\tilde{\mat{U}}_t\mat{\Lambda}_t\tilde{\mat{U}}'_t + \Delta \mat{S}$ by $\mat{C}$,we have the following:
\begin{equation}
\begin{array}{rl}
\| \mat{B} - \mat{C} \|_F &= \|\tilde{\mat{S}}_{t} - \tilde{\mat{U}}_t\mat{\Lambda}_t\tilde{\mat{U}}'_t\|_F\\
						  &\leq \sum_{i\notin \mathcal{H}} \lambda_t^{(i)}
\end{array}
\end{equation}
\noindent where the last inequality is due to the following fact:
\begin{equation}
\begin{array}{rl}
\| \sum_i a_i \mat{u}_i \mat{u}'_i\|_F &= \sqrt{\mbox{tr}(\sum_i a_i^2\mat{u}_i \mat{u}'_i )}\\
									   &= \sqrt{\sum_i a_i^2 \mbox{tr} (\mat{u}_i \mat{u}'_i)}\\
									   &= \sqrt{\sum_i a_i^2}\\
									   &\leq \sum_i |a_i|
\end{array}
\end{equation}
Denote $\|\mat{C}^{-1} (\mat{B} - \mat{C})\|_F$ by $\delta$, we know that
\begin{equation}
\begin{array}{rl}
\delta &\leq \|\mat{C}^{-1}\|_F\|\mat{B} - \mat{C}\|_F\\
  &\leq \frac{\sum_{i\notin \mathcal{H}} \lambda_t^{(i)}}{\sum_i \lambda_{t+1} ^{(i)}} <1
\end{array}
\end{equation}

From matrix perturbation theory~\cite{Golub:1996:MC:248979}, we will reach the following:
\begin{equation}
\begin{array}{rl}
\|\mat{w}_{t+1} - \hat{\mat{w}}_{t+1}\|_2 &= \|\mat{B}^{-1} \mat{Y}_{t+1} - \mat{C}^{-1} \mat{Y}_{t+1}\|_2\\
										  &\leq \|\mat{B}^{-1} - \mat{C}^{-1}\|_F \|\mat{Y}_{t+1}\|_2\\
										  &\leq \frac{\|\mat{C}^{-1}\|_F^2 \|\mat{B} - \mat{C}\|_F}{1-\delta} \|\mat{Y}_{t+1}\|_2\\
										  &\leq \frac{\sum_{i\notin \mathcal{H}} \lambda_t^{(i)}}{(\sum_i\lambda_{t+1}^{(i)})^2 (1-\delta)} \|\mat{Y}_{t+1}\|_2\\
\end{array}
\end{equation}
\end{proof}

{\bf D - Complexities:}
Finally, we analyze the complexities of Algorithm~\ref{alg:eigen_update} and Algorithm~\ref{alg:approx_kernel_joint}. In terms of time complexity, the savings are two-folds: (1) we only need to compute the kernel matrices involving new training samples; (2) we avoid the time consuming large matrix inverse operation. In terms of space complexity, we don't need to maintain the huge $\mat{S}_t$ matrix, but instead store its top-$r$  eigen pairs which is only of $O(nr)$ space.

\begin{theorem}{(Complexities of Algorithm~\ref{alg:eigen_update} and Algorithm~\ref{alg:approx_kernel_joint}.)}\label{theorem:complexities}
Algorithm~\ref{alg:eigen_update} takes $O((n+m)(r^2 + r'^2))$ time and $O((n+m)(r+r'))$ space. Algorithm~\ref{alg:approx_kernel_joint} also takes $O((n+m)(r^2 + r'^2))$ time and $O((n+m)(r+r'))$ space, where $m$ is total number of new training samples.
\end{theorem}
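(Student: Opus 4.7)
The plan is to walk line-by-line through Algorithm~\ref{alg:eigen_update} and tally both the time and space costs in terms of the relevant dimensions, then observe that Algorithm~\ref{alg:approx_kernel_joint} only adds a cheap matrix--vector product on top. Throughout, let $r$ be the rank kept for $\mat{S}_t$, let $r'$ be the (effective) rank of $\Delta\mat{S}$, and note that the outer dimension of every ``tall'' matrix in the update is at most $n+m$.

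\textbf{Step 1 (sizes of the iterates).} I would first record the shapes: $\tilde{\mat{U}}_t$ is $(n+m)\times r$ (just $\mat{U}_t$ with $m$ zero rows inserted), the new kernel blocks $\matsup{k_{t+1}}{\cdot}$, $\matsup{h_{t+1}}{\cdot}$, $\matsup{k_{t+1}}{i_*j}$, $\matsup{k_{t+1}}{ij_*}$, $\matsup{h_{t+1}}{i_*j_*}$ together have $O((n+m)m)$ entries, $\Delta\mat{S}$ is $(n+m)\times(n+m)$ but sparse with only those $O((n+m)m)$ nonzeros and effective rank $r'$, so its factors satisfy $\mat{P}\in\mathbb{R}^{(n+m)\times r'}$, $\mat{\Sigma}\in\mathbb{R}^{r'\times r'}$; then $[\tilde{\mat{U}}_t,\mat{P}]$ is $(n+m)\times(r+r')$, the triangular factor $\mat{R}$ and the small matrix $\mat{Z}$ are both $(r+r')\times(r+r')$, and the final $\mat{U}_{t+1}=[\tilde{\mat{U}}_t,\Delta\mat{Q}]\mat{V}$ is $(n+m)\times(r+r')$.

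\textbf{Step 2 (cost of each line).} With the sizes in hand, I would cost lines~1--8 one by one: inserting zero rows (line~1) is $O((n+m)r)$; computing the new kernel blocks (line~2) is $O((n+m)m)$; assembling the sparse $\Delta\mat{S}$ (line~3) is proportional to the number of nonzeros; eigendecomposing the rank-$r'$ sparse $\Delta\mat{S}$ (line~4) costs $O((n+m)r'^2)$; the partial Gram--Schmidt on $\mat{P}$ against $\tilde{\mat{U}}_t$ (line~5) costs $O((n+m)(r+r')r')$; forming $\mat{Z}=\mat{R}\,\mathrm{diag}(\mat{\Lambda}_t,\mat{\Sigma})\,\mat{R}'$ and its full eigendecomposition (lines~6--7) cost $O((r+r')^3)$, which is absorbed in the tall terms once $r+r'\le n+m$; finally $[\tilde{\mat{U}}_t,\Delta\mat{Q}]\mat{V}$ (line~8) costs $O((n+m)(r+r')^2)$. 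Summing and using $(r+r')^2\le 2(r^2+r'^2)$ collapses all contributions into $O((n+m)(r^2+r'^2))$. The space cost is dominated by the tall factors $\tilde{\mat{U}}_t$, $\mat{P}$, $\Delta\mat{Q}$ and $\mat{U}_{t+1}$, each of size at most $(n+m)(r+r')$, yielding $O((n+m)(r+r'))$ total, since the small $(r+r')\times(r+r')$ matrices and the sparse $\Delta\mat{S}$ with $O((n+m)m)$ nonzeros are lower order under the standing assumption that $m=O(r+r')$ per update (the whole point of the incremental scheme).

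\textbf{Step 3 (Algorithm~\ref{alg:approx_kernel_joint}).} Algorithm~\ref{alg:approx_kernel_joint} invokes Algorithm~\ref{alg:eigen_update} once and then computes $\mat{w}_{t+1}=\mat{U}_{t+1}\mat{\Lambda}_{t+1}^{-1}\mat{U}'_{t+1}\mat{Y}_{t+1}$. The inner product $\mat{U}'_{t+1}\mat{Y}_{t+1}$ is $O((n+m)(r+r'))$, scaling by $\mat{\Lambda}_{t+1}^{-1}$ is $O(r+r')$, and the final multiply by $\mat{U}_{t+1}$ is another $O((n+m)(r+r'))$; all of these are absorbed into the eigen-update cost. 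Hence the overall time and space match those of Algorithm~\ref{alg:eigen_update}, which is exactly what the theorem claims.

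\textbf{Main obstacle.} The only subtle bookkeeping is bounding the partial QR step (line~5) and justifying that the $(r+r')^3$ small-matrix work is never dominant; both reduce to the elementary inequality $(r+r')^2 \le 2(r^2+r'^2)$ and the assumption that $r+r' \le n+m$, so no deep tool is needed beyond careful accounting.
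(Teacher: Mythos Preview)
Your proposal is correct and follows essentially the same approach as the paper's (hidden) proof: a line-by-line tally of Algorithm~\ref{alg:eigen_update}, followed by the observation that Algorithm~\ref{alg:approx_kernel_joint} adds only an $O((n+m)(r+r'))$ matrix--vector product. The paper's accounting differs in a few per-line estimates (e.g., it bills line~4 as $O(nmr')$ from sparse mat-vecs and line~5 as $O((n+m)r'^2)$), but the structure and final collapse are the same; if anything, you are more explicit than the paper in flagging the implicit assumption $m=O(r+r')$ needed to absorb the $O((n+m)m)$ kernel-block and sparse-storage terms into the stated bound.
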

\begin{proof}
Omitted for brevity.
\hide{
{\it Time complexity of Algorithm~\ref{alg:eigen_update}:} Step 1-3 take $O(nm)$ time, where $n$ is total number of training samples from previous step, and $m$ is the total number of new training samples. Eigen decomposition of $\Delta \mat{S}$ in step 4 takes  $O(nmr')$, where $r'$ is the rank of $\Delta \mat{S}$, since $\Delta \mat{S}$ is sparse matrix with $O(nm)$ non-zero entries. 
QR decomposition in step 5 takes $O((n+m)r'^2)$ since we only need to start from the columns in $\mat{P}$. Step 6 and 7 both take $O((r+r')^3)$ time. The last line takes at most $O((n+m)(r+r')^2)$. The overall time complexity is $O((n+m)(r^2 + r'^2))$.

{\it Space complexity of Algorithm~\ref{alg:eigen_update}:} The storage of eigen pairs requires $O((n+m)r)$ space. Step 1-3 take $O(mn)$ space.  Eigen decomposition of $\Delta \mat{S}$ in step 4 takes $O((n+m)r')$ space. QR decomposition in step 5 needs $O((n+m)(r+r'))$ space. Step 6 and 7 take $O((r+r')^2)$ space and line 8 needs $O((n+m)(r+r'))$. The overall space complexity is $O((n+m)(r+r'))$.

{\it Time complexity of Algorithm~\ref{alg:approx_kernel_joint}:} Update eigen decomposition of $\mat{S}_{t+1}$ in step 1 takes $O((n+m)(r^2 + r'^2))$ time and computing the new learning parameter in step 2 takes $O(n+m)r$ time. The overall time complexity is $O((n+m)(r^2 + r'^2))$.

{\it Space complexity of Algorithm~\ref{alg:approx_kernel_joint}:} Update eigen decomposition of $\mat{S}_{t+1}$ in step 1 takes $O((n+m)(r+r'))$ and computing the new learing parameter in step 2 takes $O((n+m)r)$ space. The overall space complexity is $O((n+m)(r+r'))$.
}
\end{proof}

\section{Experiments}
\label{sec:exp}
In this section, we design and conduct experiments mainly to inspect the following aspects:
\begin{itemize}
\item {\it Effectiveness:} How accurate are the proposed algorithms for predicting scholarly entities' long-term impact?

\item {\it Efficiency:} How fast are the proposed algorithms?
\end{itemize}

\subsection{Experiment Setup}
We use the real-world citation network dataset AMiner\footnote{http://arnetminer.org/billboard/citation} to evaluate our proposed 
algorithms. The statistics and empirical observations are described in Section~\ref{sec:prob}. Our primary task is to predict a paper's citations after 10 years given its citation history in the first three years. Thus, we only keep papers published between year 1936 and 2000 to make sure they are at least 10 years old. This leaves us 508,773 papers. Given that the citation distribution is skewed (see Figure~\ref{fig:citation_dist}\hh{fill in}), the 10-year citation counts are normalized to the range of $[0,7]$. Our algorithm is also able to predict citation counts for other scholarly entities including researchers and venues. We keep authors whose research career (when they publish the first paper) begin between year 1960 and 2000 and venues that are founded before year 2002. This leaves us 315,340 authors and 3,783 venues.

For each scholarly entity, we represent it as a three dimensional feature vector, where the $i$-th dimension is the number of citations the entity receives in the $i$-th year after its life cycle begins (e.g., paper gets published, researchers publish the first paper ). 
We build a $k$-nn graph ($k=5$) among different scholarly entities; use METIS~\cite{Karypis:1998:FHQ:305219.305248} to partition the graph into balanced clusters; and treat each cluster as a domain.  We set the domain number ($n_d$) to be 10 for both papers and researchers; and 5 for venues. The Gaussian kernel matrix of the cluster centroids  is used to construct the domain-domain adjacency matrix $\mat{A}$.

To simulate the dynamic scenario where training samples come in stream, we start with a small initial training set and at each time step add new training samples to it. The training samples in each domain are sorted by starting year (e.g., publication year). In the experiment, for papers, we start with 0.1\% initial training data and at each update add another 0.1\% training samples. The last 10\% samples are reserved as test samples, i.e., we always use information from older publications for the prediction of the latest ones. For authors, we start with 0.2\% initial training data and at each update add another 0.2\% training data and use the last 10\% for testing. For venues, we start with 20\%, add 10\% at each update and use last 10\% for testing.

The root mean squared error (RMSE) between the the actual citation and the predicted one is adopted for accuracy evaluation.  All the experiments were performed on a Windows machine with four 3.5GHz Intel Cores and 256GB RAM. 

{\it Repeatability of Experimenal Results:} The AMiner citation dataset is publicly available. We will release the code of the proposed algorithms through authors' website. For all the results reported in this section, we set $\theta = \lambda = 0.01$ in our joint predictive model. Gaussian kernel with $\sigma = 5.1$ is used in the non-linear formulations.

\subsection{Effectiveness Results}
We perform the effectiveness comparisons of the following nine methods:
\begin{itemize}
\vspace{-1pt}
\setlength\itemsep{0.5pt}
\item [1] {\it Predict 0:} directly predict 0 for test samples since majority of the papers have 0 citations.
\item [2] {\it Sum of the first 3 years:} assume the total number of citations doesn't change after three years.
\item [3] {\it Linear-combine:} combine training samples of all the domains for training using linear regression model.
\item [4] {\it Linear-separate:} train a linear regression model for each domain separately.
\item [5] {\it \ourAlgName-linear:} jointly learn the linear regression models as in our linear formulation.
\item [6] {\it Kernel-combine:} combine training samples of all the domains for training using kernel ridge regression model~\cite{Saunders:1998:RRL:645527.657464}. 
\item [7] {\it Kernel-separate:} train a kernel ridge regression model for each domain separately.
\item [8] {\it \ourAlgName-kernel:} jointly learn the kernel regression models as in our non-linear formulation.
\item [9] {\it \ourAlgName-fast :} proposed algorithm for speeding up the joint non-linear model.
\end{itemize}

\noindent {\it A - Overall paper citation prediction performance.}
The RMSE result of different methods for test samples from all the domains is shown in Figure~\ref{fig:rmse_alldomain}.
We have the following observations: (1) the non-linear methods (\ourAlgName-fast, \ourAlgName-kernel, Kernel-separate, Kernel-combine) outperform the linear methods (\ourAlgName-linear, Linear-separate, Linear-combine) and the straightforward `Sum of first 3 years' is much better than the linear methods, which reflects the complex non-linear relationship between the features and the impact. (2) The performance of \ourAlgName-fast is very close to \ourAlgName-kernel and sometimes even better, which confirms the good approximation quality of the model update and the possible de-noising effect offered by the low-rank approximation. (3) The \ourAlgName~ family of joint models is better than their separate versions (Kernel-separate, Linear-separate). 
To evaluate the statistical significance, we perform a t-test using 1.4\% of the training samples and show the $p$-values in Table~{\ref{tab:ttest}}. From the result, we see that the improvement of our method is significant. To investigate parameter sensitivity, we perform parametric studies with three parameters in \mbox{\ourAlgName-fast}, namely, $\theta$, $\lambda$ and $r$. Figure~{\ref{fig:sen_study}} shows that the proposed method is stable in a large range of the parameter space.

\begin{table*}[!t]
\caption{$p$-value of statistical significance}
\centering
\begin{tabular}{|p{45pt}|p{35pt}|p{35pt}|p{35pt}|p{35pt}|p{40pt}|p{35pt}|p{35pt}|p{35pt}|}
\hline
 & Predict 0 & Linear-combine & Linear-separate & \ourAlgName-linear & Sum of first 3 years & Kernel-combine & Kernel-separate & \ourAlgName-fast \\
 \hline
\ourAlgName-kernel & 0 & 5.53e-16 & 6.12e-17 & 1.16e-13 & 1.56e-219 & 1.60e-72 & 8.22e-30 & 3.39e-14\\
 \hline
\end{tabular}
\label{tab:ttest}
\end{table*}

\begin{figure*}[!htb]
\centering
\minipage[t]{0.32\textwidth}
\centering
 \includegraphics[width = \textwidth]{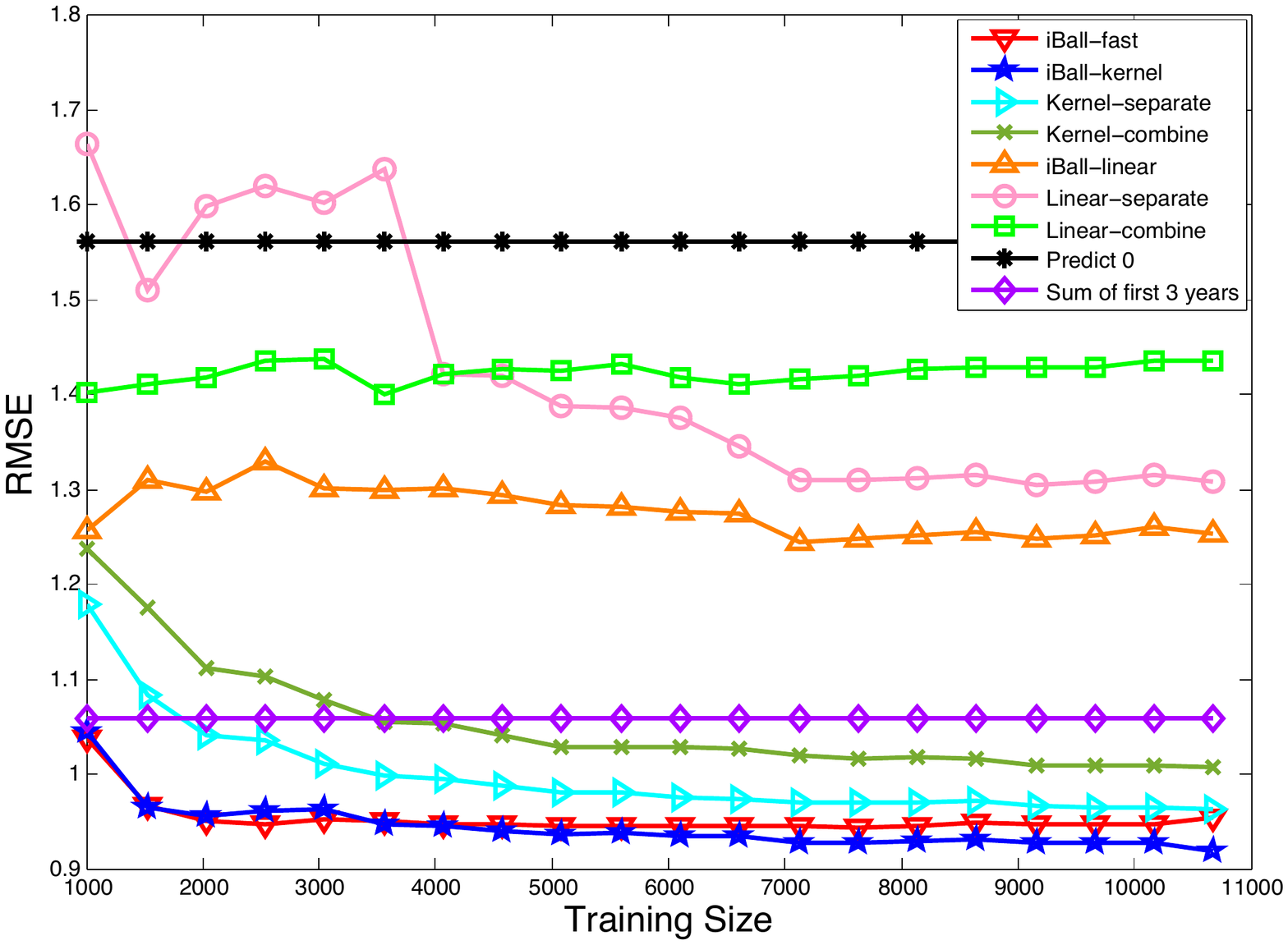}
\caption{Overall paper citation prediction performance comparisons. Lower is better. }\label{fig:rmse_alldomain}
\endminipage\hfill
\minipage[t]{0.32\textwidth}
\centering
 \includegraphics[width = \textwidth]{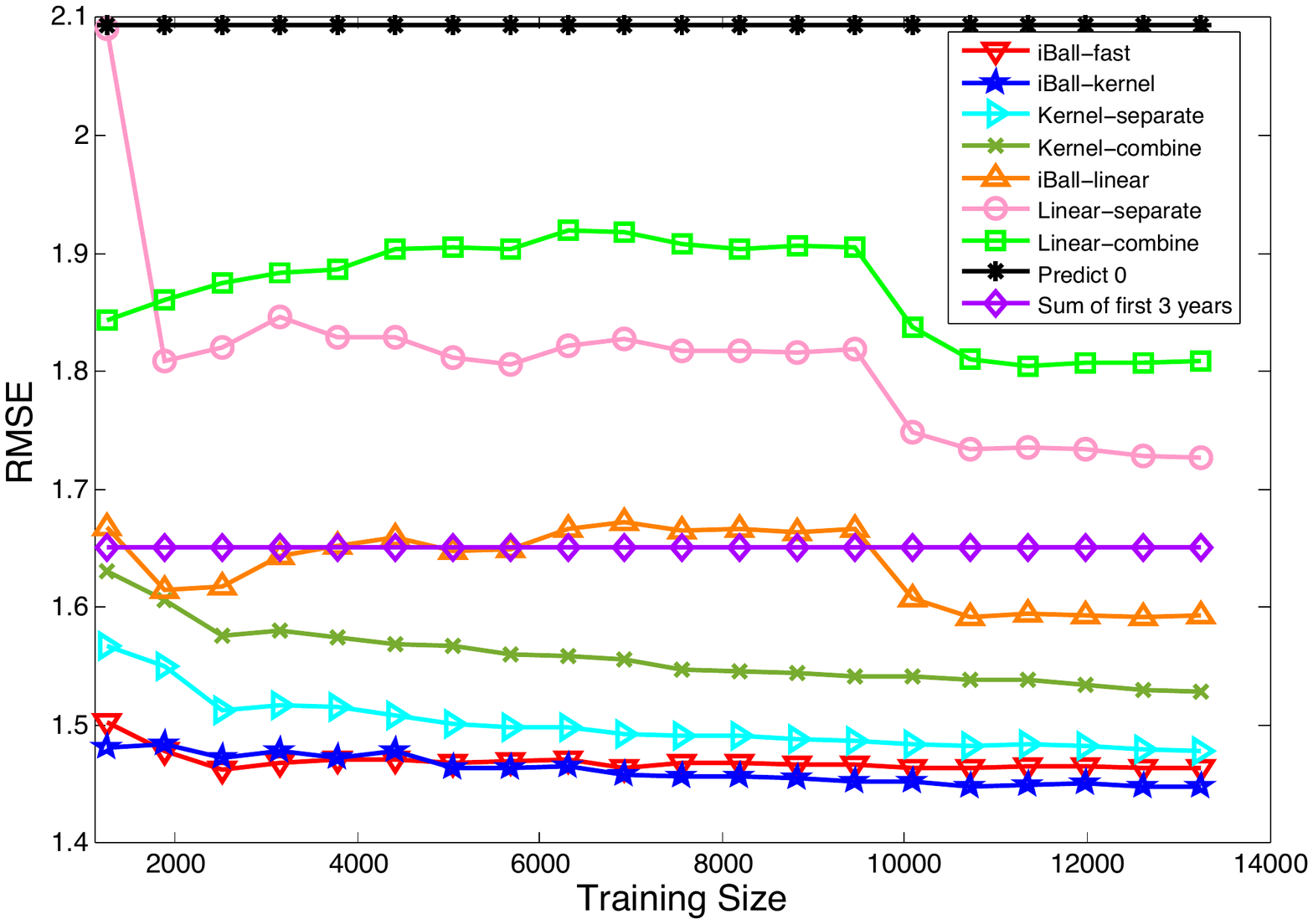}
\caption{Author citation prediction performance comparison. Lower is better. }\label{fig:author_rmse}
\endminipage\hfill
\minipage[t]{0.32\textwidth}%
\centering
 \includegraphics[width = \textwidth]{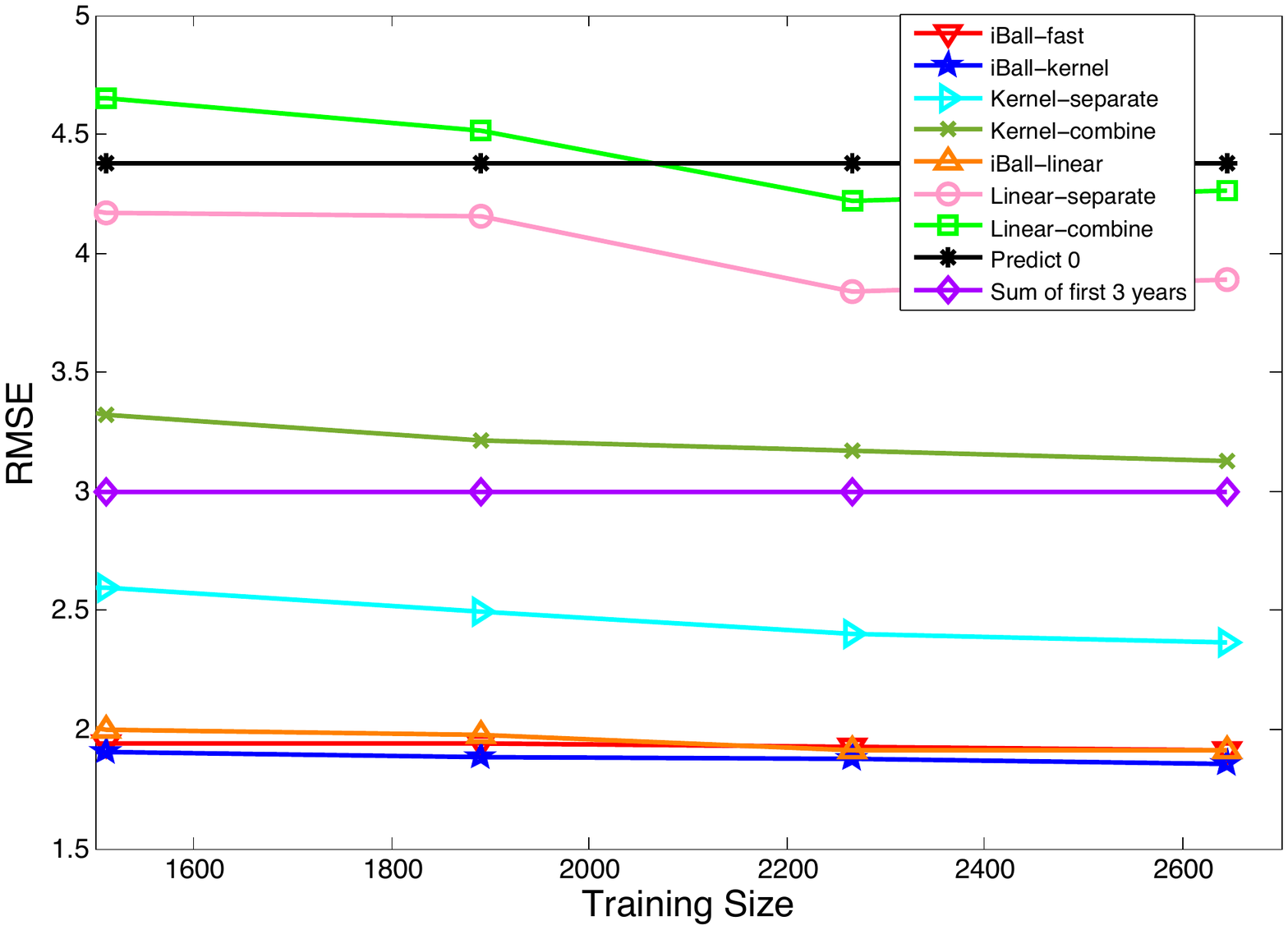}
\caption{Venue citation prediction performance comparison. Lower is better. }\label{fig:venue_rmse}
\endminipage
\vspace{-10pt}
\end{figure*}

\begin{figure*}
    \centering
    \begin{subfigure}[b]{0.3\textwidth}
        \centering
        \includegraphics[width=\textwidth]{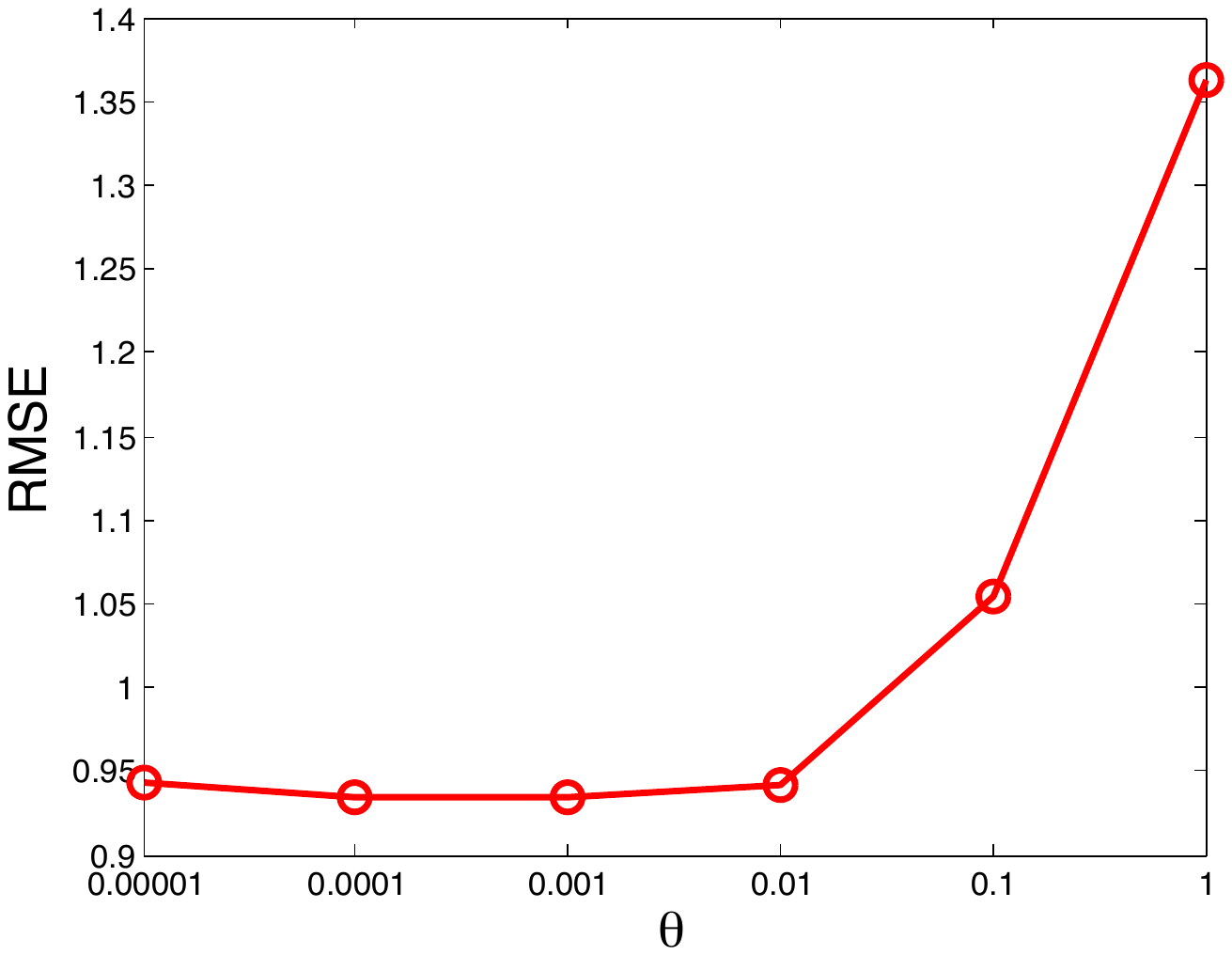}
        \caption{RMSE vs. $\theta$}
    \end{subfigure}
    \hfill
    \begin{subfigure}[b]{0.3\textwidth}
        \centering
        \includegraphics[width=\textwidth]{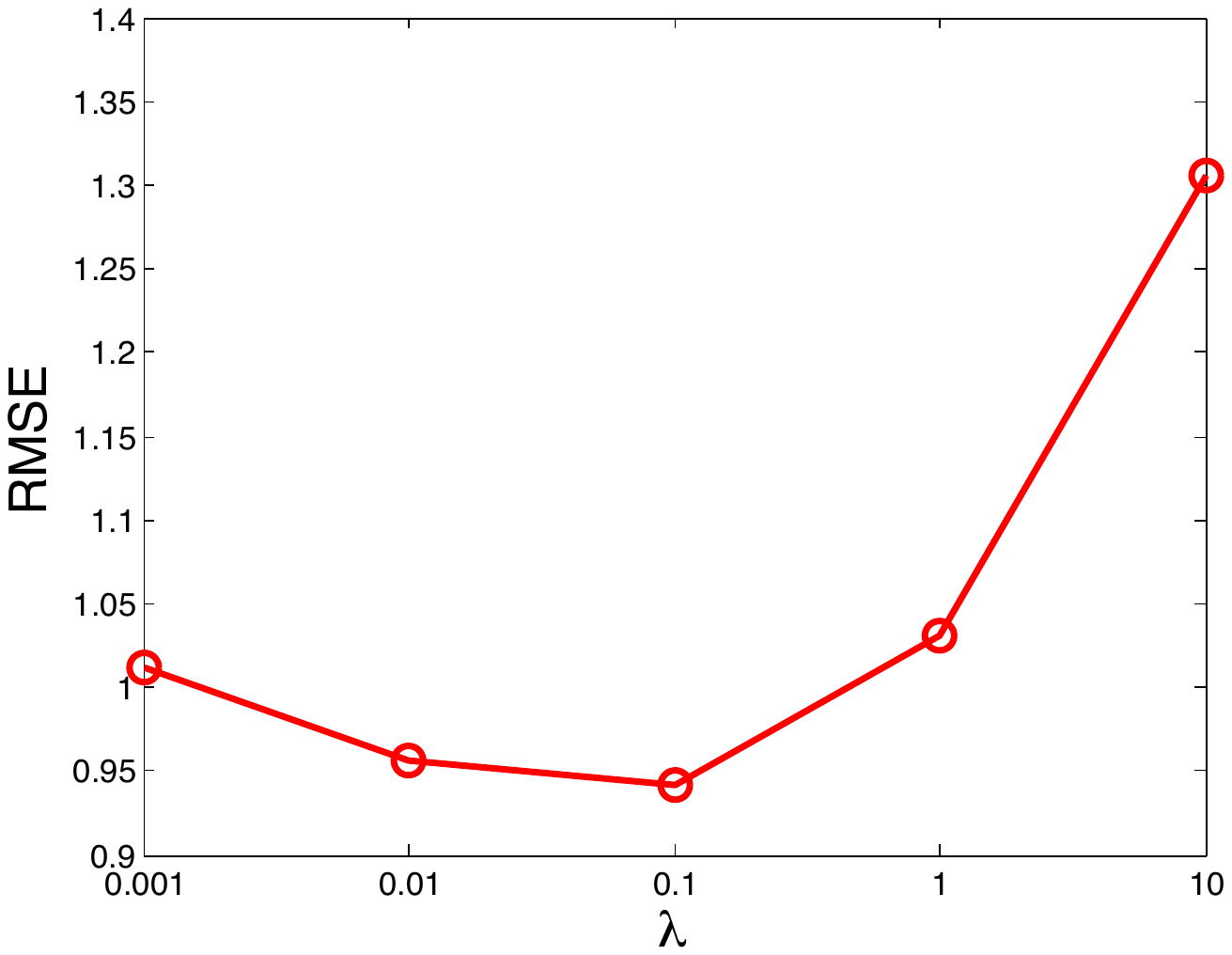}
        \caption{RMSE vs. $\lambda$}
    \end{subfigure}
    \hfill
    \begin{subfigure}[b]{0.3\textwidth}
        \centering
        \includegraphics[width=\textwidth]{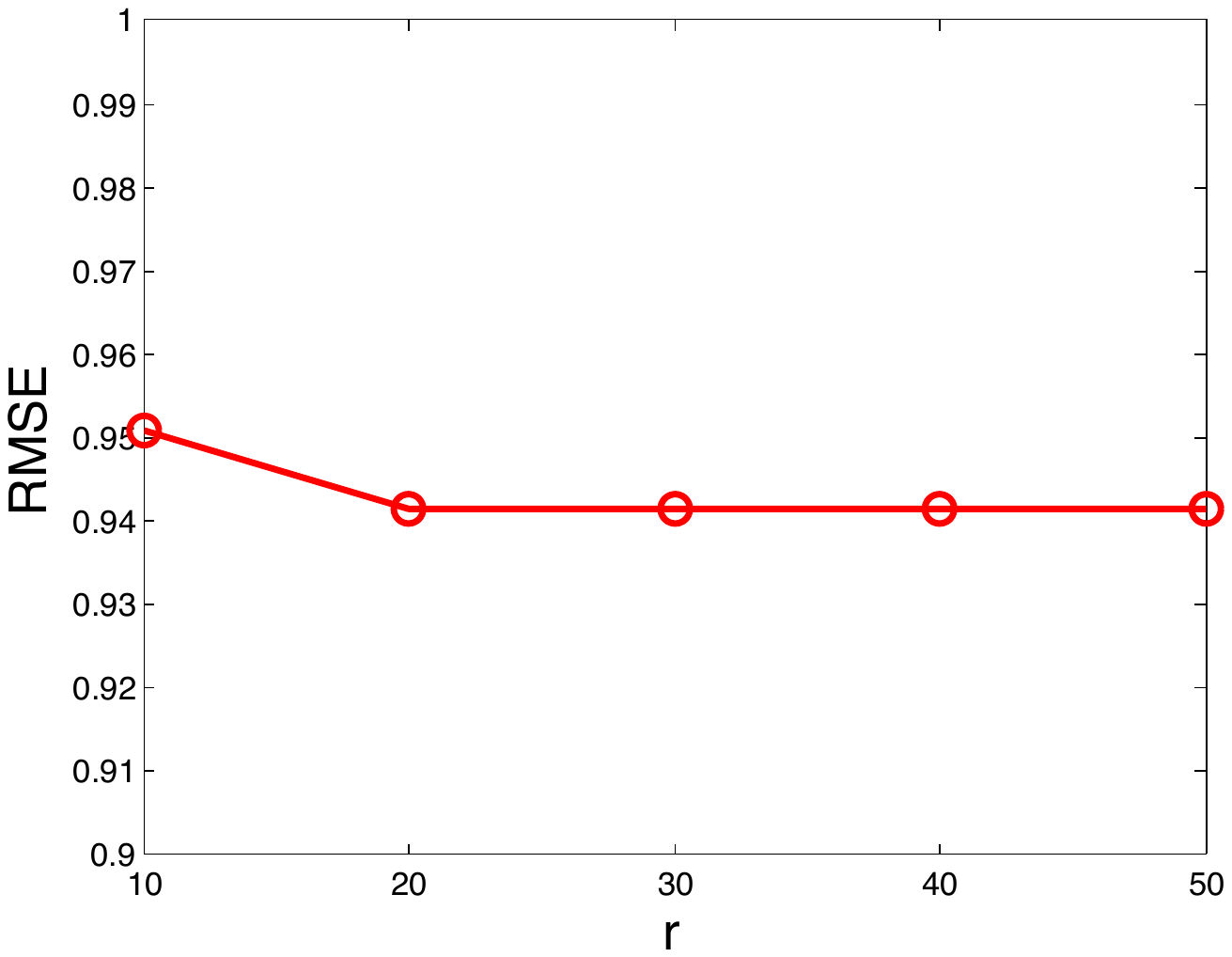}
        \caption{RMSE vs. $r$}
    \end{subfigure}
    \caption{Sensitivity study on \ourAlgName-fast: study the effect of the parameters $\theta$, $\lambda$ and $r$ in terms of RMSE.}
    \label{fig:sen_study}
\end{figure*}


\noindent {\it B -  Domain-by-domain paper citation prediction performance.} In Figure~\ref{fig:domain_rmse} we show the RMSE comparison results for four domains with different total training sizes. \ourAlgName-kernel and its fast version \ourAlgName-fast consistently outperform other methods in all the domains. In the third domain, some linear methods (Linear-separate and Linear-combine) perform even worse than the baseline (`Predict 0').

\begin{figure*}[!htb]
\centering
\begin{subfigure}[t]{0.45\textwidth}
\centering
\includegraphics[width =\textwidth]{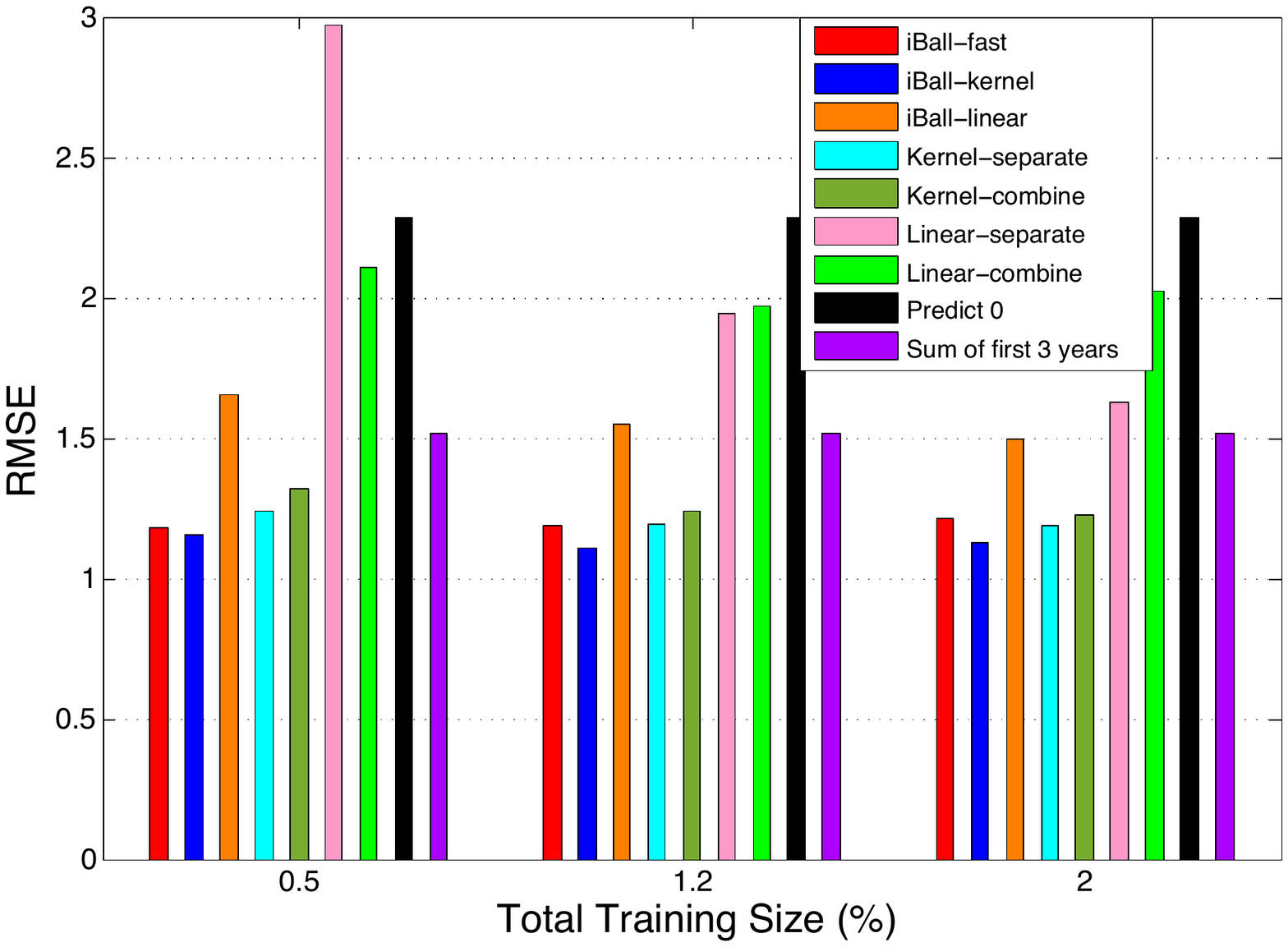}
\caption{Prediction performance comparison in the first domain. }
\end{subfigure}
~
\begin{subfigure}[t]{0.45\textwidth}
\centering
\includegraphics[width =\textwidth]{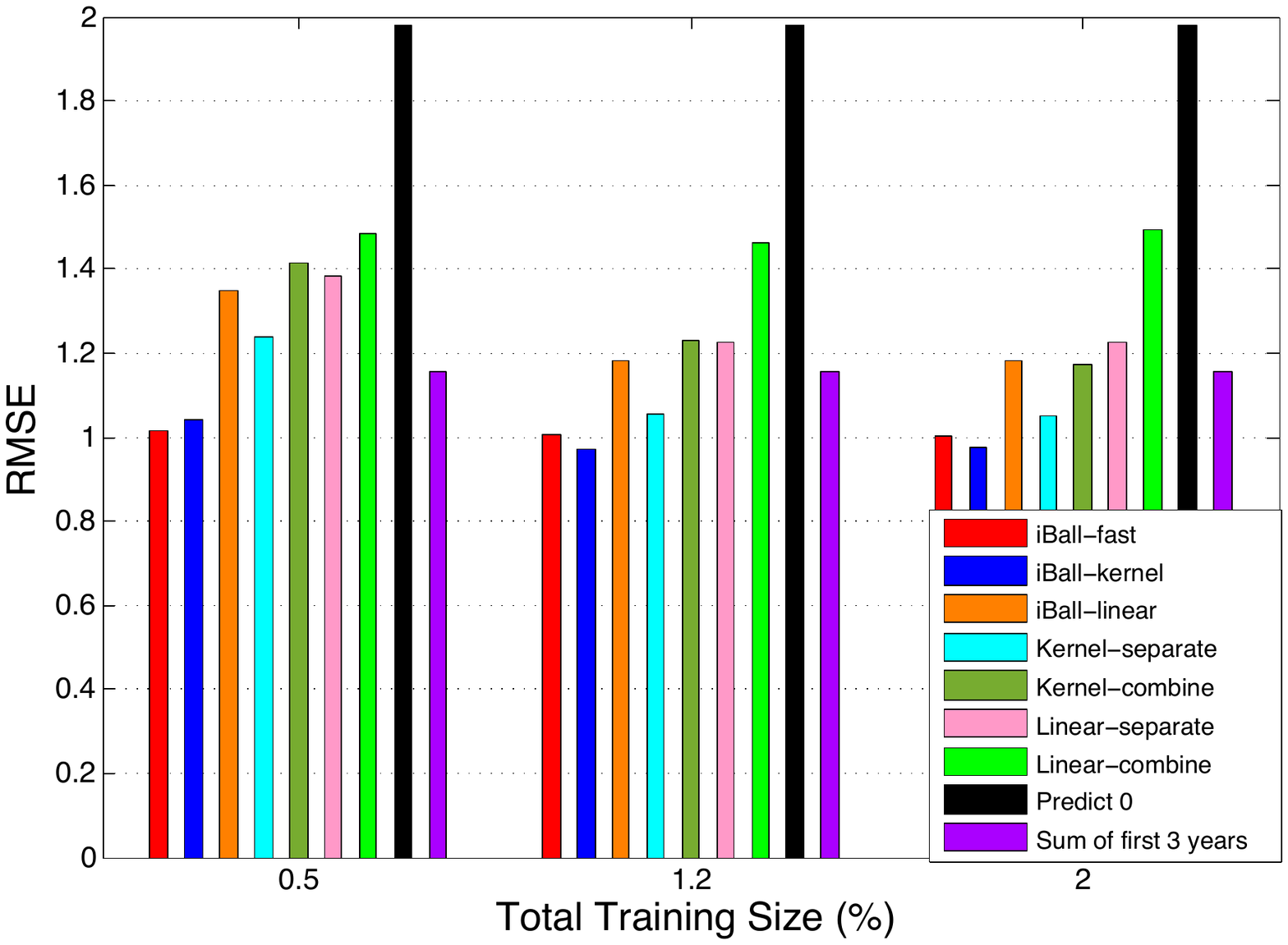}
\caption{Prediction performance comparison  in the second domain. }
\end{subfigure}

\begin{subfigure}[t]{0.45\textwidth}
\centering
\includegraphics[width =\textwidth]{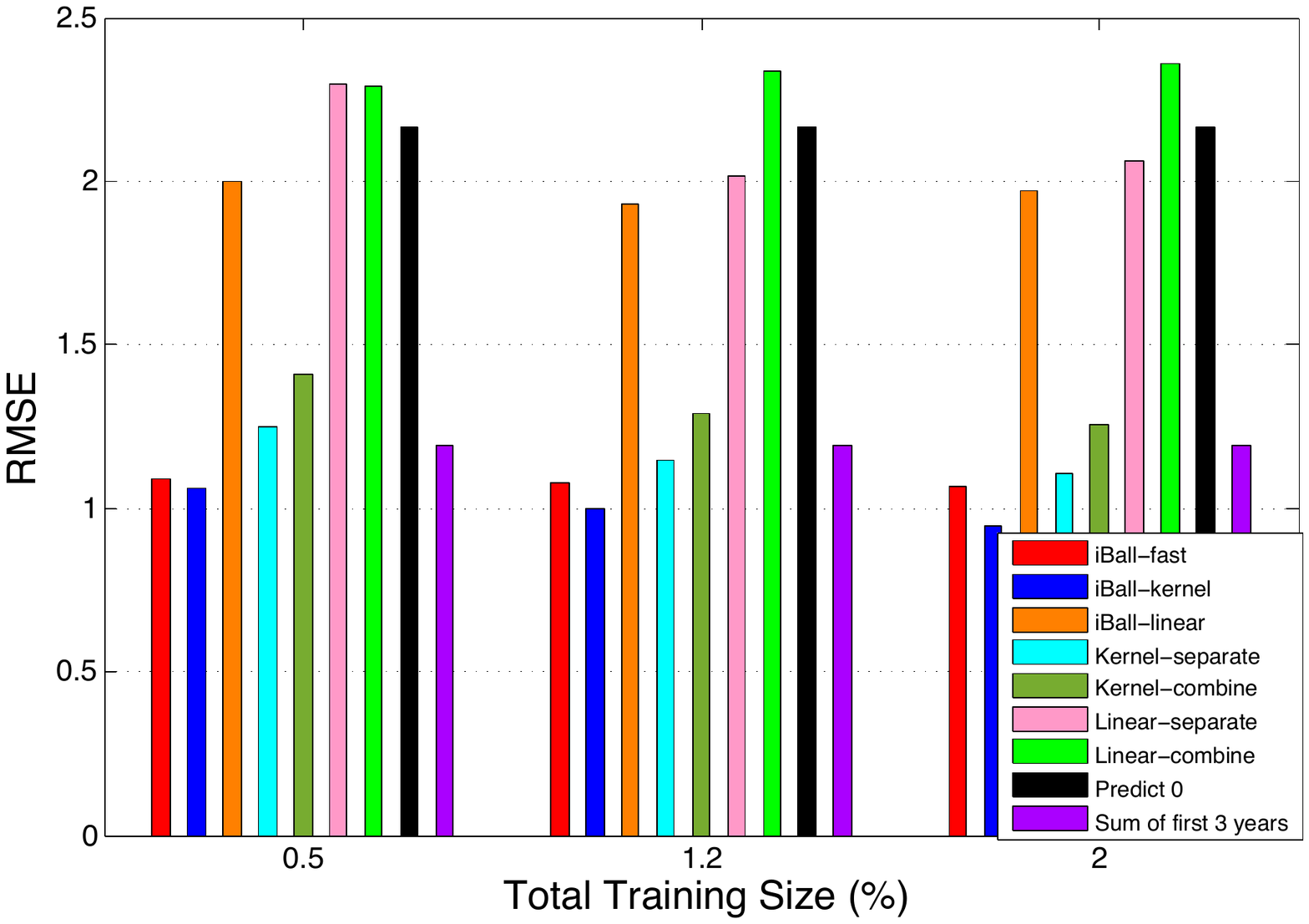}
\caption{Prediction performance comparison in the third domain. }
\end{subfigure}
~
\begin{subfigure}[t]{0.45\textwidth}
\centering
\includegraphics[width = \textwidth]{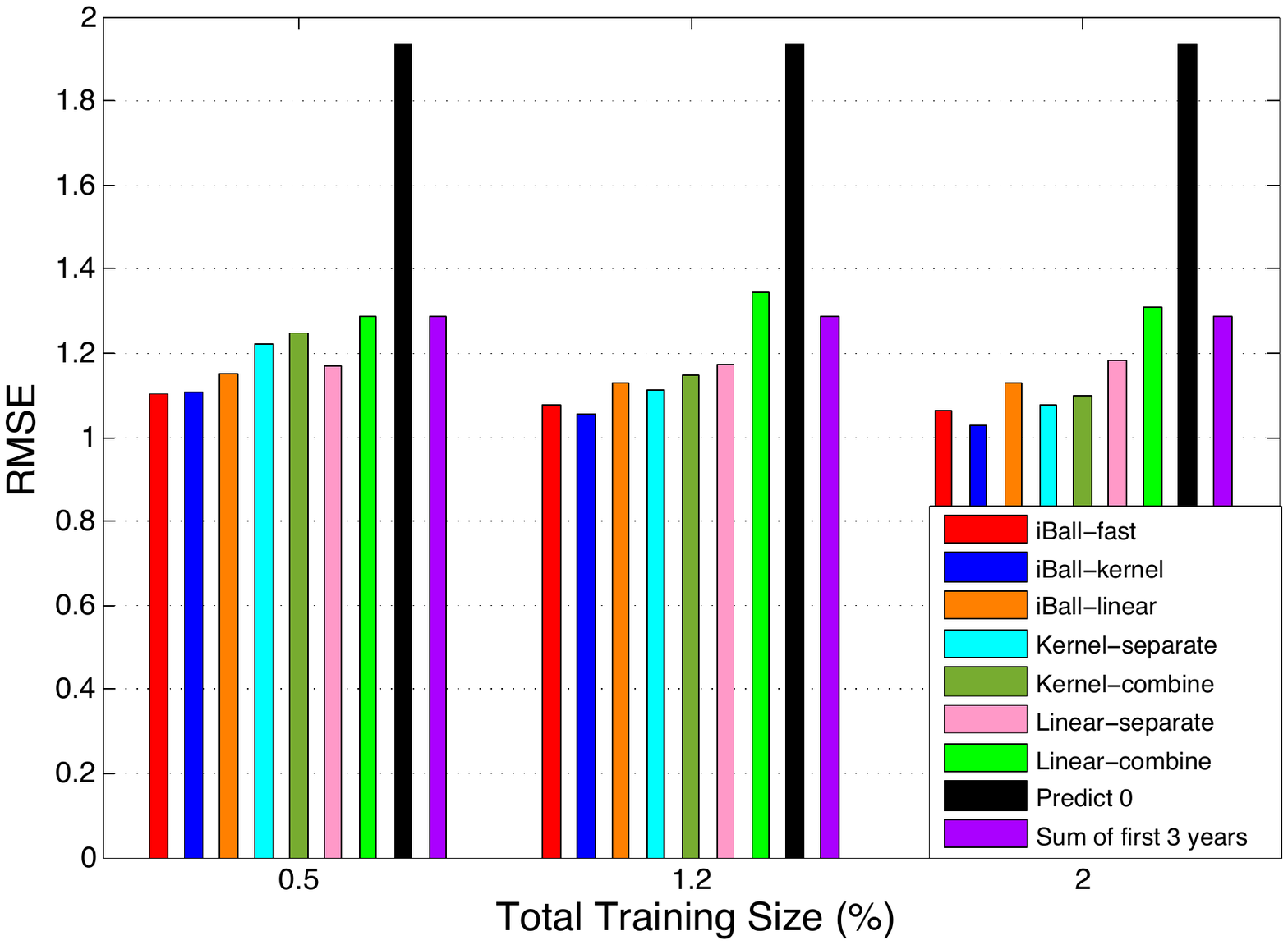}
\caption{Prediction performance comparison in the fourth domain. }
\end{subfigure}
\vspace{-3pt}
\caption{Paper citation prediction performance comparison in four domains.}
\label{fig:domain_rmse}
\vspace{-10pt}
\end{figure*}

\noindent {\it C - Prediction error analysis.}
We visualize the actual citation vs. the predicted citation using \ourAlgName~as a heat map in Figure~\ref{fig:heatmap}. The $(x,y)$ square means among all the test samples with actual citation $y$, the percentage that have predicted citation $x$. We observe a very bright region near the $x=y$ diagonal. The prediction error mainly occurs in a bright strip at $x=1, y\geq 1$. This is probably due to the delayed high-impact of some scientific work, as suggested by the blue and green lines in Figure~\ref{fig:citation_pattern}, i.e., some papers only pick up attentions many years after they were published.

\begin{figure}[!htb]
\centering
\includegraphics[width = 0.45\textwidth]{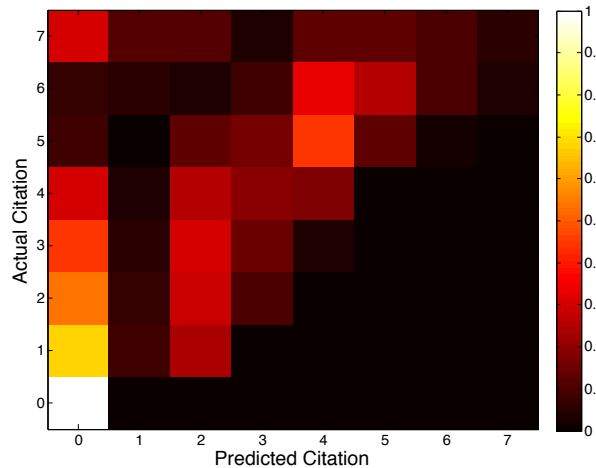}
\caption{Prediction error analysis: actual citation vs. predicted citation. Best viewed in color.}\label{fig:heatmap}
\vspace{-10pt}
\end{figure}

\noindent{\it D - Author and venue citation prediction performance.} We also show the RMSE comparison results for the impact prediction of authors and venues in Figure~\ref{fig:author_rmse} and \ref{fig:venue_rmse} respectively. Similar observations can be made as the paper impact prediction, except that for the venue citation prediction, \ourAlgName-linear can achieve the similar performance as \ourAlgName-fast and \ourAlgName-kernel. This is probably due to the effect that venue citation (which involves the aggregation of the citations of all of its authors and papers) prediction is at a much coarser granularity, and thus a relatively simple linear model is sufficient to characterize the correlation between features and outputs (citation counts).


\subsection{Efficiency Results}

\noindent {\it A - Running time comparison:} We compare the running time of different methods with different training sizes and show the result in Figure~\ref{fig:time_comparison} with time in log scale. All the linear methods are very fast ($<0.01s$) as the feature dimensionality is only 3.  Our \ourAlgName-fast outperforms all other non-linear methods and scales linearly \hh{liangyue: is it true? i thought it is linear}. 

\begin{figure}[!htb]
\minipage[t]{0.22\textwidth}
\includegraphics[width = \textwidth]{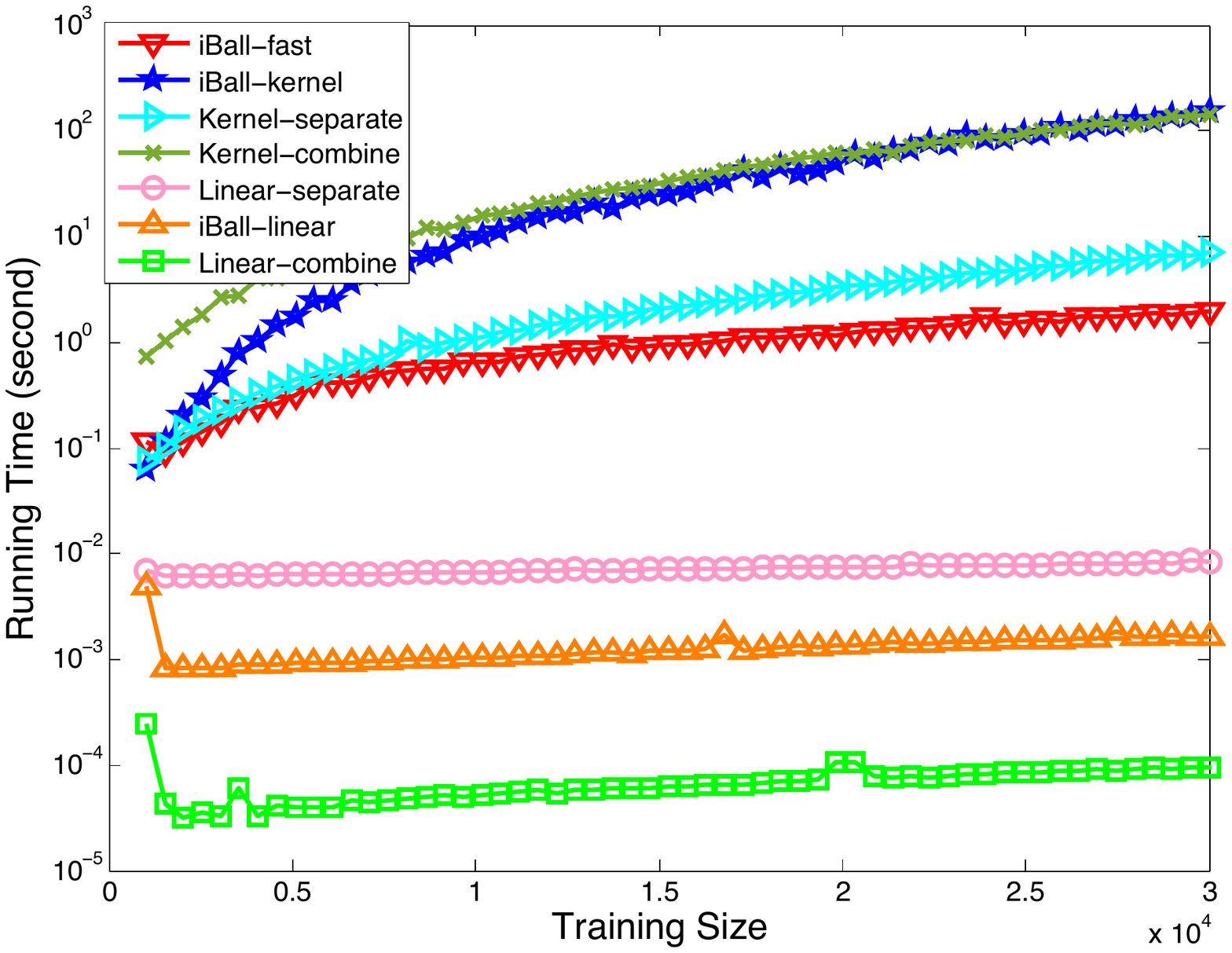}
\caption{Comparison of running time of different methods. The time axis is of log scale. }\label{fig:time_comparison}
\endminipage\hfill
\minipage[t]{0.22\textwidth}
 \includegraphics[width = \textwidth]{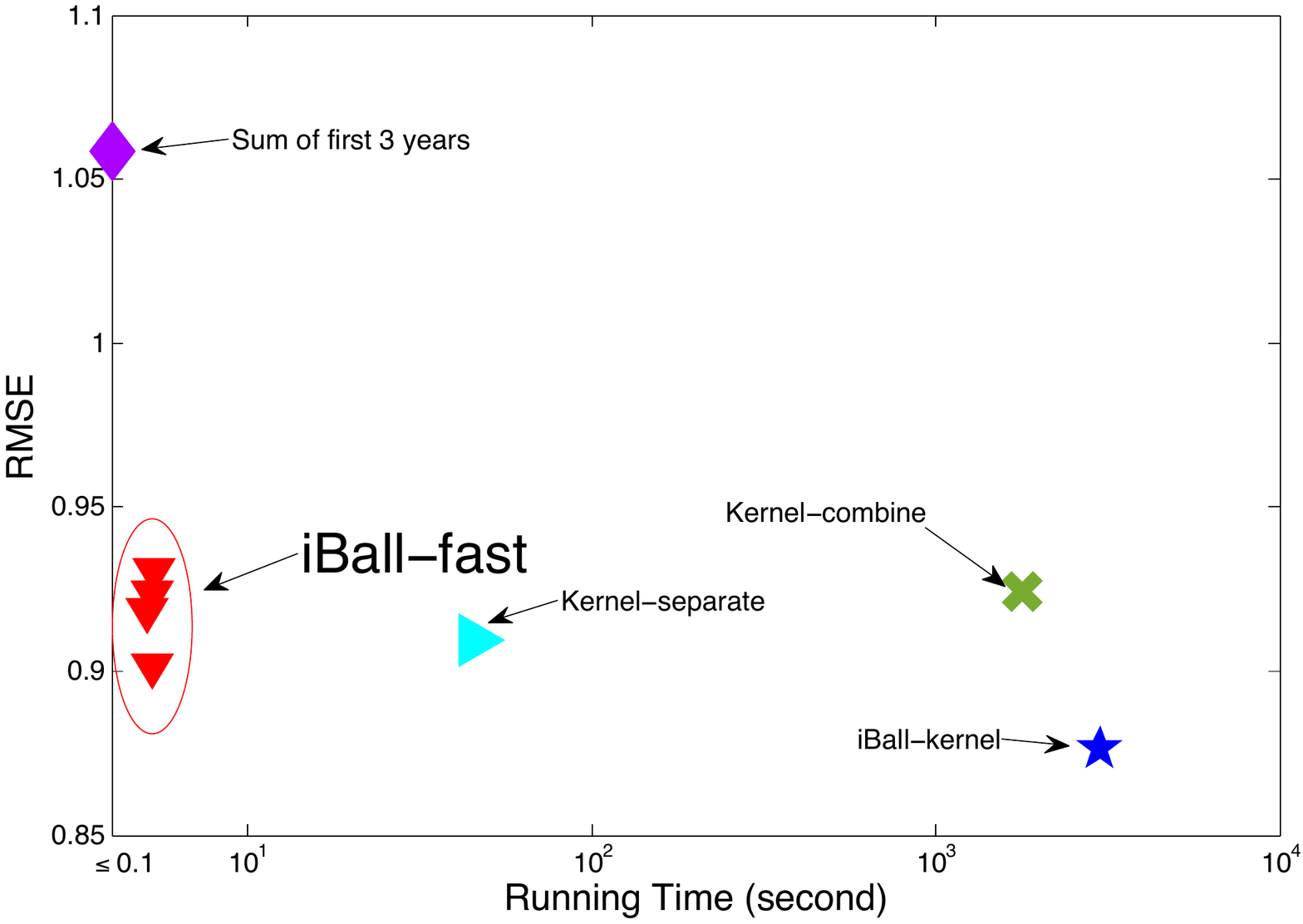}
\caption{Quality vs. speed with 88,905 training samples. }\label{fig:quality_plot}
\endminipage
\vspace{-5pt}
\end{figure}


\noindent {\it B - Quality vs. speed:} Finally, we evaluate how the proposed methods balance between the prediction quality and speed. In Figure~\ref{fig:quality_plot}, we show the RMSE vs. running time of different methods with 88,905 total training samples. For \ourAlgName-fast, we show its results using different rank $r$ for the low-rank approximation. Clearly, \ourAlgName-fast achieves the best trade-off between quality and speed as its results all lie in the bottom left corner.

\vspace{-5pt}
\section{Related Work}
\label{sec:rel}
In this section, we review the related work.

{\bf Impact/popularity prediction:}
As a pilot study, Yan et al.~\cite{DBLP:conf/cikm/YanTLSL11, DBLP:conf/jcdl/YanHTZL12} identify effective features to address citation count prediction problem. 
Davletov et al.~\cite{DBLP:conf/cikm/DavletovAC14} address the same problem by first clustering papers according to their temporal change in citation counts over time and assigning a polynomial to each cluster for regression.
In light of the difficulty posed by power law distribution of citations, Dong et al.~\cite{conf/wsdm/dong15} instead consider whether a paper can increase the primary author's {\it h}-index. Yu et al.~\cite{DBLP:conf/sdm/YuGZH12} address predicting citation relations in heterogeneous bibliographical networks.

A close line of work is to predict the popularity of other online contents, e.g., posts, videos, TV series.
Yao et al.~\cite{DBLP:conf/kdd/YaoTXL14} predict the long-term impact of questions/answers. Notice that in terms of methodology, the method in ~\cite{DBLP:conf/kdd/YaoTXL14} can be conceptually viewed as a special case of our \ourAlgName\ model when there are only two domains and the instance-level correspondence across different domains (e.g., question-answers association) is known.  
Li et al.~\cite{DBLP:conf/cikm/LiMWLX13} conduct an study on popularity forecast of videos shared in social networks. They consider both the intrinsic attractiveness of a video and the influence from the underlying diffusion structure. 
Chang et al.~\cite{DBLP:conf/cikm/ChangZGCXT14} are the first to comprehensively study for predicting the popularity of online serials with autoregressive models. As online serials have strong sequence dependence and release date dependence, they develop an autoregressive model to capture the dynamic behaviors of audiences. Though the focus of this paper is to propose a tailored method to predict the long-term citation counts, our method could be naturally applied to other related applications, e.g., popularity prediction.

{\bf Multi-task learning:} 
Our joint model is also related to multi-task learning as we jointly learn the models for each domain (task). Multi-task learning aims to improve the generalization performance of a learning task with the help of other related tasks. A key challenge in multi-task learning is how to exploit the relationship among different tasks to allow information shared across tasks. One way is by sharing of parameters. In neural networks, hidden units are shared across tasks~\cite{caruana1997multitask}.  It can also be induced by assuming that the parameters used by all tasks are close to each other by minimizing the Frobenius norms of their differences in methods based on convex optimization formulations~\cite{evgeniou2004regularized}. In Bayesian hierarchical models, parameter sharing can be imposed by assuming a common prior they share~\cite{yu2005learning}. A second way is assuming a common basis of the parameter space. A low-rank and sparse structure of the underlying predictive hypothesis has been applied to capture the tasks relatedness as well as outlier tasks~\cite{DBLP:conf/kdd/ChenLY10,Chen:2011:ILG:2020408.2020423,jalali2010dirty}.Our method is directly applicable when the correlation/similarity among different tasks is known and enjoys a closed-form solution. In terms of computation, we also provide an efficient way to track the joint predictive model in the dynamic setting.

{\bf Scholarly data mining:}
Scholarly data can be viewed as a heterogeneous information network of papers, authors, venues and terms~\cite{sun2009ranking}. Mining of such scholarly data is often from following perspectives: (1) similarity search to find a similar scholarly entities given a query entity or a set of query entities~\cite{DBLP:journals/pvldb/SunHYYW11,DBLP:conf/icdm/TongFP06,DBLP:conf/kdd/TongF06}; (2)literature recommendation to recommend related research papers on a topic~\cite{Liu:2014:MRP:2661829.2661965,chandrasekaran2008concept}; and (3)co-author collaboration prediction, to predict if two researchers will collaborate in the future~\cite{sun2011co,sun2012will}.

\vspace{-5pt}
\section{Conclusions}
\label{sec:con}
In this paper, we propose \ourAlgName ~-- a family of algorithms for the prediction of long-term impact of scientific work given its citation history in the first few years.
The proposed algorithms collectively address a number of key algorithmic challenges in impact prediction (i.e., feature design, non-linearity, domain heterogeneity and dynamics). It is flexible and general in the sense that it can be generalized to both regression and classification models; and in both linear and non-linear formulations; it is scalable and adaptive to new training data. 

\hide{
Our main contributions include:
\begin{itemize}
\item {\bf Algorithm:} A family of algorithms are proposed to address long-term impact prediction for scientific work.
\item {\bf Proofs and analysis:} Proofs of correctness of closed-form solutions and eigen update are given; update quality and complexity analysis are provided.
\item {\bf Empirical evaluations:} Extensive experimental evaluations validate the effectiveness and efficiency of the proposed algorithms.
\end{itemize}
}
\vspace{-5pt}
\section{Acknowledgments}
We would like to thank Dr. Jie Tang for providing the dataset. This material is supported by the National Science Foundation under Grant No. IIS1017415, by the Army Research Laboratory under Cooperative Agreement Number W911NF-09-2-0053, by National Institutes of Health under the grant number R01LM011986, Region II University Transportation Center under the project number 49997-33 25.  

The content of the information in this document does not necessarily reflect the position or the policy of the Government, and no official endorsement should be inferred.  The U.S. Government is authorized to reproduce and distribute reprints for Government purposes notwithstanding any copyright notation here on.
%
\balance
\small
\bibliographystyle{abbrv}
\bibliography{references}  
%
%

\end{document}